\documentclass[11pt]{article}

\usepackage{authblk}

\usepackage{ifpdf}
\usepackage{amsmath}
\usepackage{amssymb}
\usepackage{epsf}
\usepackage{epsfig}
\usepackage{amsfonts}
\usepackage{amsthm}
\usepackage{enumerate} 
\usepackage{graphicx}
\usepackage{subfig}
\usepackage{float}
\newcommand{\Loss}{\mathop{{\rm{Loss}}}\nolimits}

\newcommand{\G}{\mathfrak G}

\ifpdf
  \usepackage[pdftex]{hyperref}
\else
  \usepackage[dvips]{hyperref}
\fi


\newcommand{\calE}{{\cal E}}

\newcommand{\calS}{{\cal S}}




\newtheorem{proposition}{Proposition}
\newtheorem{theorem}{Theorem}
\newtheorem{lemma}{Lemma}
\newtheorem{corollary}{Corollary}

\newtheorem{protocol}{Protocol}

\theoremstyle{remark}
\newtheorem{remark}{Remark}

\emergencystretch=5mm
\tolerance=400
\allowdisplaybreaks[4]

\author[1,2]{Dmitry
  Adamskiy\thanks{\href{mailto:D.Adamskiy@cs.ucl.ac.uk}{D.Adamskiy@cs.ucl.ac.uk}}}
\author[5]{Tony Bellotti\thanks{\href{mailto:A.Bellotti@imperial.ac.uk}{A.Bellotti@imperial.ac.uk}}}
\author[3]{Raisa Dzhamtyrova\thanks{\href{mailto:Raisa.Dzhamtyrova.2015@rhul.ac.uk}{Raisa.Dzhamtyrova.2015@live.rhul.ac.uk}}}
\author[3,2,4]{Yuri Kalnishkan\thanks{\href{mailto:yuri.kalnishkan@rhul.ac.uk}{Yuri.Kalnishkan@rhul.ac.uk}}}

\affil[1]{University College London,
Dept.\ of Computer Science, 66-72, Gower Street, London, WC1E 6EA
United Kingdom}

\affil[2]{Computer Learning Research Centre, Royal
    Holloway, University of London, Egham, Surrey, TW20 0EX, United Kingdom}

\affil[3]{Department of Computer Science, Royal
    Holloway, University of London, Egham, Surrey, TW20 0EX, United
    Kingdom}
\affil[4]{Laboratory of Advanced Combinatorics and Network
  Applications, Moscow Institute of Physics and Technology,
  Institutsky per., 9, Dolgoprudny, 41701, Russia} 
\affil[5]{Department of Mathematics, Imperial College London,
London, SW7 2AZ, United Kingdom}

\title{Aggregating Algorithm for Prediction of Packs}

\begin{document}

\maketitle

\begin{abstract}
This paper formulates the protocol for prediction of packs, which a
special case of prediction under delayed feedback. Under this
protocol, the learner must make a few predictions without seeing the
outcomes and then the outcomes are revealed. We develop the theory of
prediction with expert advice for packs. By applying Vovk's
Aggregating Algorithm to this problem we obtain a number of algorithms
with tight upper bounds. We carry out empirical experiments on housing
data.
\end{abstract}

\section{Introduction}

This paper deals with the on-line prediction protocol, where the
learner needs to predict outcomes $\omega_1,\omega_2\ldots$ occurring
in succession. The learner is getting the feedback along the way.

In the basic on-line prediction protocol, on step $t$ the learner
outputs a prediction $\gamma_t$ and then immediately sees the true
outcome $\omega_t$. The quality of the prediction is assessed by a
loss function $\lambda(\gamma,\omega)$ measuring the discrepancy
between the prediction and outcome or, generally speaking, quantifying
the (adverse) effect when a prediction $\gamma$ confronts the outcome
$\omega$. The performance of the learner is assessed by the cumulative
loss over $T$ trials $\Loss(T) =
\sum_{t=1}^T\lambda(\gamma_t,\omega_t)$.

In a protocol with the delayed feedback, there may be a delay getting
true $\omega$s. The learner may need to make a few predictions before
actually seeing the outcomes of past trials. We will consider a special
case of that protocol when outcomes come in packs: the learner needs
to make a few predictions, than all outcomes are revealed, and again a
few predictions need to be made.

A model problem we consider is prediction of house prices. Consider a
dataset consisting of descriptions of houses and sale prices. Suppose
that the prices come with transaction dates; it is therefore natural
to analyse this dataset in an on-line framework trying to predict
house prices on the basis of past information only.

However, let the timestamps in the dataset contain only the month of
the transaction. Every month a few sales occur and we do not know the
order. It is natural to try and work out all predicted prices for a
particular month on the basis of past months and only then to look at
the true prices. One month of transactions makes what we call a pack.

We are concerned in this paper with the problem of prediction with
expert advice. Suppose that the learner has access to predictions of
a number of experts. Before the learner makes a prediction, it can see
experts' predictions and its goal is to suffer loss close to that of
the retrospectively best expert.

The problem of prediction with expert advice is related to that of
on-line optimisation, which has been extensively studied since
\cite{zinkevich2003online}. These approaches overlap to a very large
extent and many results make sense in both the frameworks. The problem
of delayed feedback has mostly been studies within the on-line optimisation
approach, e.g., in \cite{joulani2013online,
  quanrud2015online}. However, in this paper we will stick to
the terminology and approach of prediction with expert advice going
back to \cite{maj_little_1994} and surveyed in
\cite{cbl_book_prediction}. Our starting point and the main tool is
Vovk's Aggregating Algorithm \cite{vovk_aggr, vovk_advice}, which
provides a solution optimal in a certain sense.

Our key idea is to consider a pack as a single outcome. We study
mixability of the resulting game and develop a few algorithms for
prediction of packs based on the Aggregating Algorithm. We obtain
upper bounds on their performance and discuss optimality
properties. The reason why we need different algorithms is that the
situation when the pack size varies from step to step can be addressed
in different ways leading to different bounds.

The key result of the theory of delayed feedback stating that the
regret multiplies by the magnitude of the delay
(see \cite{joulani2013online, weinberger2002delayed}) cannot be improved,
but it receives interpretation in the context of the theory of
prediction with expert advice with specific lower bounds of the
Aggregating Algorithm type. In empirical studies our new algorithms
show more stable performance than the existing algorithm based on
running parallel copies of the merging procedure.

We carry out an empirical investigation on London and Ames house
prices datasets. The experiments follow the approach of
\cite{kalnishkan2015specialist}: prediction with expert advice can
used to find relevant past information. Predictors trained on
different sections of past data can be combined in the on-line mode so
that prediction is carried out using relevant past data.

The paper is organised as follows. In Section~\ref{sect_preliminaries}
the theory of the Aggregating Algorithm is surveyed. In
Section~\ref{section_packs} we formulate the protocols for prediction
of packs and then study the mixability of resulting games. This
analysis leads to the Aggregating Algorithm for Prediction of Packs
formulated in Section~\ref{sect_algorithms}. The theory can be applied
in different ways leading to different loss bounds, hence a few
variations of the algorithm. We also describe the algorithm based on
running parallel copies of the Aggregating Algorithm: it is a
straightforward adaptation of an existing delayed feedback algorithm
to our problem. Empirical experiments are described in
Section~\ref{sect_experiments}.

As the upper bounds on the loss are based on the theory of the
Aggregating Algorithm, most of them are tight in the worst case.  As a
digression from the prediction with expert advice framework, in
Section~\ref{sect_mix_loss} we prove a self-contained lower bound for
prediction of packs in the context of the mix loss protocol of
\cite{adamskiy2016closer}.

\section{Preliminaries}
\label{sect_preliminaries}

\subsection{Prediction with Expert Advice}

In this section we formulate the classical problem of prediction with
expert advice.

A game $\G=\langle\Omega,\Gamma,\lambda\rangle$ is a triple of an {\em
  outcome space} $\Omega$, {\em prediction space} $\Gamma$, and {\em
  loss function} $\lambda: \Gamma\times\Omega\to
[0,+\infty]$. Outcomes $\omega_1,\omega_2,\ldots\in\Omega$ occur in
succession. A {\em learner} or {\em prediction strategy} outputs
predictions $\gamma_1,\gamma_2,\ldots\in\Gamma$ before seeing each
respective outcome. The learner may have access to some side
information; we will say that on each step the learner sees a {\em
  signal} $x_t$ coming from a {\em signal space} $X$.

The framework is summarised in Protocol~\ref{prot_with_signals}.

\begin{protocol}~
{\tt
\label{prot_with_signals}
\begin{tabbing} 
  \quad\=\quad\=\quad\=\quad\=\quad\kill
   FOR $t=1,2,\ldots$\\
   \>\> nature announces $x_t \in X$\\
   \>\> learner outputs $\gamma_t \in \Gamma$\\
   \>\> nature announces $\omega_t \in \Omega$\\
   \>\> learner suffers loss  $\lambda(\gamma_t,\omega_t)$\\
   ENDFOR
\end{tabbing}
}
\end{protocol}

Over $T$ trials the learner $\calS$ suffers the cumulative loss $
\Loss_T=\Loss_T(\calS) =\sum_{t=1}^T\lambda(\gamma_t,\omega_t)$.

In this paper we assume a full information environment. The learner
knows $\Omega$, $\Gamma$, and $\lambda$. It sees all $\omega_t$ as
they become available. On the other hand, we make no assumptions on
the mechanism generating $\omega_t$ and will be interested in
worst-case guarantees for the loss.

Now let $\{\calE_\theta\mid \theta\in\Theta\}$ be a set of learners
working according to Protocol~\ref{prot_with_signals} and parametrised
by $\theta\in\Theta$. We will refer to these learners as {\em experts}
and to the set as the {\em pool of experts}. If the pool is finite and
$|\Theta|=N$, we will refer to experts as
$\calE_1,\calE_2,\ldots,\calE_N$.  Suppose that on each turn, their
predictions are made available to a learner $\calS$ as a special kind
of side information. The learner then works according to the following
protocol.

\begin{protocol}~
  {\tt
\label{prot_experts}
\begin{tabbing}
\quad\=\quad\=\quad\=\quad=\quad\kill
 FOR $t=1,2,\ldots$\\
 \>\> experts $\calE_\theta$, $\theta\in\Theta$ announce predictions $\gamma_t^\theta\in \Gamma$\\
 \>\> learner outputs $\gamma_t \in \Gamma$\\
 \>\> nature announces $\omega_t \in \Omega$\\
 \>\> each expert $\theta\in\Theta$ suffers loss $\lambda(\gamma_t^\theta,\omega_t)$\\
 \>\> learner suffers loss  $\lambda(\gamma_t,\omega_t)$\\
ENDFOR
\end{tabbing}
}
\end{protocol}

The goal of the learner in this setup is to suffer loss close to the
best expert in retrospect. We look for {\em merging strategies} giving
guarantees of the type $\Loss_T(\calS) \lesssim \Loss_T(\calE_\theta)$
for all $\theta\in\Theta$, all sequences of outcomes, and as many $T$
as possible.

The merging strategies we are interested in are computable in some
natural sense; we will not make exact statements about computability
though. We do not impose any restrictions on experts. In what follows,
the reader may substitute the clause `for all predictions
$\gamma_t^\theta$ appearing in Protocol~\ref{prot_experts}' for the
more intuitive clause `for all experts'.

\subsection{Aggregating Algorithm}

In this section we present Vovk's Aggregating Algorithm (AA) after
\cite{vovk_aggr, vovk_advice}. In this paper we restrict ourselves to
finite pools of experts, but AA can be straightforwardly extended to
countable pools (by considering infinite sums) and even larger pools
(by replacing sums with integrals).

The algorithm takes a parameter $\eta>0$ called the {\em learning
  rate} and a prior distribution $p^1,p^2,\ldots,p^N$ ($p^n\ge 0$ and
$\sum_{i=1}^Np^n=1$) on experts 
$\calE_1,\calE_2,\ldots,\calE_N$.

A constant $C>0$ is {\em admissible} for a learning rate $\eta>0$ if
for every $N=1,2,\ldots$, every sets of predictions
$\gamma^1,\gamma^2,\ldots,\gamma^N$, and every distributions
$p^1,p^2,\ldots,p^N$ (such that $p^n\ge 0$ and $\sum_{i=1}^Np^n=1$)
there is $\gamma\in\Gamma$ ensuring for all outcomes $\omega\in\Omega$
the inequality
\begin{equation}
\label{eq_mixability}
\lambda(\gamma,\omega)\le
-\frac{C}{\eta}\ln\sum_{n=1}^Np^ne^{-\eta\lambda(\gamma^n,\omega)} \enspace.
\end{equation}

The {\em mixability constant} $C_\eta$ is the infimum of all $C>0$
admissible for $\eta$. This infumum is usually achieved.  For example,
it is achieved for all $\eta>0$ whenever $\Gamma$ is compact and
$e^{-\lambda(\gamma,\omega)}$ is continuous\footnote{Or
  $\lambda(\gamma,\omega)$ is continuous w.r.t.\ the extended topology
  of $[0,+\infty]$.} in $\gamma$.

The AA works as follows. It takes as parameters a prior distribution 
$p^1,p^2,\ldots,p^N$ (such that $p^n\ge 0$ and $\sum_{i=1}^Np^n=1$), a
learning rate $\eta>0$ and a constant $C$ admissible for $\eta$.

\begin{protocol}~
  {\tt
\label{prot_AA}
\begin{tabbing}
  \quad\=\quad\=\quad\=\quad\kill
  (1) initialise weights $w_0^n=p^n$, $n=1,2,\ldots,N$ \\   
  (2) FOR $t=1,2,\dots$\\
  (3) \>\>\> read the experts' predictions $\gamma_t^n$, $n=1,2,\ldots,N$ \\ 
  (4) \>\>\> normalise the weights $p_{t-1}^n=w^n_{t-1}/\sum_{i=1}^Nw^i_{t-1}$\\
  (5) \>\>\> output $\gamma_t\in\Gamma$ satisfying for all
  $\omega\in\Omega$ the inequality\\
  \>\>\>$\lambda(\gamma_t,\omega)\le 
-\frac{C}{\eta}\ln\sum_{n=1}^Np_{t-1}^ne^{-\eta\lambda(\gamma_t^n,\omega)}$\\
  (6) \>\>\> observe the outcome $\omega_t$ \\
  (7) \>\>\> update the experts' weights
  $w_{t}^n=w_{t-1}^ne^{-\eta\lambda(\gamma_t^n,\omega_t)}$,\\ \>\> $n=1,2,\ldots,N$ \\
  (8) END FOR
\end{tabbing}
  }
\end{protocol}

\begin{proposition}
\label{prop_AA}
Let $C$ be admissible for $\eta>0$. Then for every $N=1,2,\ldots$,
the loss of a learner $\calS$ using the AA with $\eta$ and a prior
distribution $p^1,p^2,\ldots,p^N$ satisfies
\begin{equation}
\label{AA_guarantee}
\Loss_T(\calS)\le C\Loss_T(\calE_n)+\frac{C}{\eta}\ln\frac{1}{p^n}
\end{equation}
for every expert $\calE_n$, $n=1,2,\ldots,N$, all time horizons
$T=1,2,\ldots$, and all outputs made by the nature.  
\end{proposition}

\begin{proof}[Proof Sketch]
Inequality~\ref{eq_mixability} can be rewritten as
$$
e^{-\eta\lambda(\gamma,\omega)/C}\ge
\sum_{n=1}^Np^ne^{-\eta\lambda(\gamma^n,\omega)}\enspace.
$$
One can check by induction that the equality
$$
e^{-\eta\Loss_t(\calS)/C}\ge \sum_{n=1}^Np^ne^{-\eta\Loss_t(\calE_n)}
$$
holds for all $t=1,2,\ldots$. Dropping all terms but one on the
right-hand side yields the desired inequality.
\end{proof}

The importance of the AA follows from the results of
\cite{vovk_advice}. Under some mild regularity assumptions on the game
and assuming the uniform initial distribution, it can be shown that
the constants in \ref{AA_guarantee} are optimal. If any merging
strategy achieves the guarantee
$$
\Loss_T(\calS)\le C\Loss_T(\calE_n)+A\ln N
$$
for all experts $\calE_1,\calE_2,\ldots,\calE_N$, $N=1,2,\ldots$, all
time horizons $T$, and all outcomes, then the AA with the uniform
prior distribution $p^n=1/N$ and some $\eta>0$ provides the guarantee
with the same or lower $C$ and $A$.

\section{Prediction of Packs}
\label{section_packs}

\subsection{Protocol}

Consider the following extension of Protocol~\ref{prot_with_signals}.

\begin{protocol}~
{\tt
\label{prot_with_signals_pack}
\begin{tabbing} 
  \quad\=\quad\=\quad\=\quad\=\quad\kill
   FOR $t=1,2,\ldots$\\
   \>\> nature announces $x_{t,k} \in X$, $k=1,2,\ldots,K_t$\\
   \>\> learner outputs $\gamma_{t,k} \in \Gamma$,  $k=1,2,\ldots,K_t$\\
   \>\> nature announces $\omega_{t,k} \in \Omega$,  $k=1,2,\ldots,K_t$\\
   \>\> learner suffers losses  $\lambda(\gamma_{t,k},\omega_t)$,  $k=1,2,\ldots,K_t$\\
   ENDFOR
\end{tabbing}
}
\end{protocol}

In summary, at every trial $t$ the learner needs to make $K_t$
predictions rather than one.

Suppose that the learner may draw help from experts. We can extend
Protocol~\ref{prot_experts} as follows.

\begin{protocol}~
  {\tt
\label{prot_pack}
\begin{tabbing}
\quad\=\quad\=\quad\=\quad=\quad\kill
 FOR $t=1,2,\ldots$\\
 \>\> each expert $\calE_\theta$, $\theta\in\Theta$ announces\\
 \>\>\> predictions
$\gamma_{t,k}^\theta\in \Gamma$, $k=1,2,\ldots,K_t$\\
 \>\> learner outputs predictions $\gamma_{t,k} \in \Gamma$, $k=1,2,\ldots,K_t$ \\
 \>\> nature announces $\omega_{t,k} \in \Omega$, $k=1,2,\ldots,K_t$\\
 \>\> each expert $\theta\in\Theta$ suffers loss $\sum_{k=1}^{K_t}\lambda(\gamma_{t,k}^\theta,\omega_{t,k})$\\
 \>\> learner suffers loss  $\sum_{k=1}^{K_t}\lambda(\gamma_{t,k},\omega_{t,k})$\\
ENDFOR
\end{tabbing}
}
\end{protocol}

There can be subtle variations of this protocol. Instead of getting
all $K_t$ predictions from each expert at once, the learner may be
getting predictions for each outcome one by one and making its own
before seeing the next set of experts' predictions. For most of our
analysis this does not matter, as we will see later. The learner may
have to work on each `pack' of experts' predictions sequentially
without even knowing its size in advance. The only thing that make a
difference is that the outcomes come in one go after the learner has
finished predicting the pack.

\subsection{Mixability}

For a game $\G = \langle\Omega,\Gamma,\lambda\rangle$ and a positive
integer $K$ consider the game $\G^K$ with the outcome and prediction
space given by the Cartesian products $\Omega^K$ and $\Gamma^K$ and
the the loss function
$\lambda^{(K)}((\gamma_1,\gamma_2,\ldots,\gamma_K),(\omega_1,
\omega_2,\ldots,\omega_K))=\sum_{k=1}^K\lambda(\gamma_K,\omega_K)$. What
are the mixability constants for this game? Let $C_\eta$ be the
constants for $\G$ and $C_\eta^{(K)}$ be the constants for
$\G^{(K)}$.


The following lemma provides an upper bound for $C_\eta^{(K)}$.

\begin{lemma}
\label{lem_upper}
  For every game $\G$ we have $C^{(K)}_{\eta/K}\le C_\eta$.
\end{lemma}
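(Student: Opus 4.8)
The plan is to argue at the level of admissible constants: I will show that every constant $C$ that is admissible for $\eta$ in $\G$ is also admissible for $\eta/K$ in $\G^K$. Since $C^{(K)}_{\eta/K}$ and $C_\eta$ are by definition the infima of the respective sets of admissible constants, this inclusion immediately yields $C^{(K)}_{\eta/K}\le C_\eta$. So I fix a $C$ admissible for $\eta$, an arbitrary number of experts $N$, a prior $p^1,\dots,p^N$, and expert packs $\bv\gamma^n=(\gamma^n_1,\dots,\gamma^n_K)\in\Gamma^K$, and must produce a single pack $\bv\gamma=(\gamma_1,\dots,\gamma_K)\in\Gamma^K$ satisfying the defining inequality~\eqref{eq_mixability} for $\G^K$ at learning rate $\eta/K$, simultaneously for every outcome pack $(\omega_1,\dots,\omega_K)$.

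First I would write out that target inequality explicitly. Using $\lambda^{(K)}(\bv\gamma^n,(\omega_1,\dots,\omega_K))=\sum_{k=1}^K\lambda(\gamma^n_k,\omega_k)$ and the learning rate $\eta/K$, the exponential sum on the right-hand side factorises across coordinates, becoming $\sum_n p^n\prod_{k=1}^K e^{-(\eta/K)\lambda(\gamma^n_k,\omega_k)}$; everything in the inequality is then built from per-coordinate quantities, coupled only by the single summation over the experts $n$. The whole game is to decouple that summation across the $K$ coordinates.

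The key step, and the one I expect to be the crux, is to break this coupling with the generalised H\"older inequality. Setting $a_{nk}=(p^n)^{1/K}e^{-(\eta/K)\lambda(\gamma^n_k,\omega_k)}$ one has $\prod_{k=1}^K a_{nk}=p^n\prod_{k=1}^K e^{-(\eta/K)\lambda(\gamma^n_k,\omega_k)}$, and H\"older with all $K$ exponents equal to $K$ (so their reciprocals sum to one) gives
\[
\sum_n p^n\prod_{k=1}^K e^{-(\eta/K)\lambda(\gamma^n_k,\omega_k)}
\le
\prod_{k=1}^K\Big(\sum_n a_{nk}^{K}\Big)^{1/K}
=
\prod_{k=1}^K\Big(\sum_n p^n e^{-\eta\lambda(\gamma^n_k,\omega_k)}\Big)^{1/K}.
\]
The crucial simplification is $a_{nk}^{K}=p^n e^{-\eta\lambda(\gamma^n_k,\omega_k)}$: the factor $1/K$ in the learning rate is exactly what turns each decoupled factor into a genuine mixability expression of $\G$ at the \emph{original} rate $\eta$, which is the only rate for which $C$ is assumed admissible.

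Finally, for each coordinate $k$ separately I invoke admissibility of $C$ for $\eta$ in $\G$, applied to the prior $p^1,\dots,p^N$ and the predictions $\gamma^1_k,\dots,\gamma^N_k$. This produces some $\gamma_k\in\Gamma$ — depending only on the $k$-th components of the packs, hence not peeking at any outcome, so that $\bv\gamma=(\gamma_1,\dots,\gamma_K)$ is a legitimate element of $\Gamma^K$ — with $\sum_n p^n e^{-\eta\lambda(\gamma^n_k,\omega_k)}\le e^{-(\eta/C)\lambda(\gamma_k,\omega_k)}$ for all $\omega_k$. Substituting these $K$ bounds into the H\"older estimate collapses the product back into the single exponential $e^{-(\eta/(CK))\sum_k\lambda(\gamma_k,\omega_k)}$; taking logarithms and multiplying by $-CK/\eta=-C/(\eta/K)$ is then verbatim inequality~\eqref{eq_mixability} for $\G^K$ at rate $\eta/K$ with constant $C$, witnessed by $\bv\gamma$. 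The only genuine work is the H\"older decoupling; everything after it is bookkeeping. As a by-product the construction shows the pack prediction is formed coordinate-by-coordinate with the same prior, which is consistent with the protocol remark that the order within a pack is immaterial.
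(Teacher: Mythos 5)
Your proof is correct and is essentially the paper's own argument: both decouple the expert sum across the $K$ coordinates via the generalised H\"older inequality with exponents summing reciprocally to one, and then invoke admissibility of $C$ for $\G$ at rate $\eta$ coordinate-by-coordinate. The only cosmetic difference is that you absorb the prior into the functions as $(p^n)^{1/K}$ and apply unweighted H\"older, whereas the paper applies the weighted form on the measure $\mu(n)=p^n$; these are the same inequality.
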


\begin{proof}
Take $N$ predictions in the game $\G^K$, $\gamma^1 =
(\gamma^1_1,\gamma^1_2,\ldots,\gamma^1_K)$, \ldots, $\gamma^N =
(\gamma^N_1,\gamma^N_2,\ldots,\gamma^N_K)$ and weights
$p^1,p^2,\ldots,p^N$. Let $\gamma_1,\gamma_2,\ldots,\gamma_K\in\Gamma$
be some predictions satisfying
$$
e^{-\eta\lambda(\gamma_k,\omega_k)/C}\ge
\sum_{n=1}^Np^ne^{-\eta\lambda(\gamma^n_k,\omega_k)}
$$
for every $\omega_k\in\Omega$. Multiplying these inequalities yields
$$
e^{-\eta\sum_{k=1}^K\lambda(\gamma_k,\omega_k)/C}\ge \prod_{k=1}^K\sum_{n=1}^Np^ne^{-\eta\lambda(\gamma^n_k,\omega_k)}\enspace.
$$ We will now apply the generalised H\"older inequality. On measure
spaces, the inequality states that $\|\prod_{k=1}^Kf_k\|_r\le
\prod_{k=1}^K\|f_k\|_{r_k}$, where $\sum_{k=1}^K1/r_k=1/r$ (this
follows from the version of the inequality in Section~9.3 of
\cite{loeve_vol1} by induction). Interpreting a vector $x =
(x_1,x_2,\ldots,x_N)$ as a function on a discrete space
$\{1,2,\ldots,N\}$ and introducing on this space the measure $\mu(n) =
p^n$, we obtain
$$
\left(\sum_{n=1}^Np^n\left|\prod_{k=1}^Kx^k_n\right|^r\right)^{1/r} \le
\prod_{k=1}^K\left(\sum_{n=1}^Np^n\left|x^k_n\right|^{r_k}\right)^{1/r_k}\enspace.
$$
Letting $r_k=1$ and $r=1/K$ we get
\begin{align*}
e^{-\eta\sum_{k=1}^K\lambda(\gamma_k,\omega_k)/C} &\ge
\prod_{k=1}^K\sum_{n=1}^Np^ne^{-\eta\lambda(\gamma^n_k,\omega_k)}\\
&\ge
\left(\sum_{n=1}^Np^ne^{-\sum_{k=1}^K\eta\lambda(\gamma^n_k,\omega_k)/K}\right)^{K}
\enspace.
\end{align*}
Raising the resulting inequality to the power $1/K$ completes the proof.
\end{proof}

\begin{remark}
Note that the proof of the lemma offers a constructive way of
solving~(\ref{eq_mixability}) for $\G^{K}$ provided we know how to
solve~(\ref{eq_mixability}) for $\G$. Namely, to
solve~(\ref{eq_mixability}) for $\G^{K}$ with the learning rate
$\eta/K$, we solve $K$ systems for $\G$ with the learning rate $\eta$.
\end{remark}

In order to get a lower bound for $C_\eta^{(K)}$, we need the
following concepts.

A {\em generalised prediction} w.r.t.\ a game $\G$ is a function from
$\Omega$ to $[0,+\infty]$. Every prediction $\gamma\in\Gamma$
specifies a generalised prediction by $\lambda(\gamma,\cdot)$, hence
the name.

A {\em superprediction} is a generalised prediction minorised by some
prediction, i.e., a superprediction is a function $f:\Omega\to
[0,+\infty]$ such that for some $\gamma\in\Gamma$ we have
$f(\omega)\ge \lambda(\gamma,\omega)$ for all $\omega\in\Omega$. The
shape of the set of superpredictions plays a crucial role in
determining $C_\eta$.

For a wide class of games the following implication holds. If the game
is mixable (i.e., $C_\eta=1$ for some $\eta>0$), then its set of
superpredictions is convex (Lemma~7 in~\cite{me_lossleg} essentially
proves this for games with finite sets of outcomes). 

\begin{lemma}
\label{lem_lower}
For every game $\G$ with a convex set of superpredictions we have
$C^{(K)}_{\eta/K}\ge C_\eta$.
\end{lemma}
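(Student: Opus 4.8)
The plan is to establish the reverse containment of admissible constants: I will show that if a constant $C$ is admissible for $\G^K$ at learning rate $\eta/K$, then the same $C$ is admissible for $\G$ at learning rate $\eta$. Since each mixability constant is the infimum of the corresponding set of admissible constants, and $A\subseteq B$ implies $\inf A\ge\inf B$, the inclusion $\{C: C\text{ admissible for }\G^K\text{ at }\eta/K\}\subseteq\{C: C\text{ admissible for }\G\text{ at }\eta\}$ yields $C_\eta\le C^{(K)}_{\eta/K}$ at once, which is the claim.

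To prove the inclusion I would fix an admissible $C$ for $\G^K$ and an arbitrary instance of the single-game mixability problem for $\G$: predictions $\gamma^1,\ldots,\gamma^N\in\Gamma$ and a distribution $p^1,\ldots,p^N$. The first step is to lift this instance to the product game by forming, for each $n$, the constant vector $\bar\gamma^n=(\gamma^n,\ldots,\gamma^n)\in\Gamma^K$ while keeping the same weights $p^n$. Admissibility of $C$ for $\G^K$ at rate $\eta/K$ then provides a product prediction $\bar\gamma=(\gamma_1,\ldots,\gamma_K)\in\Gamma^K$ satisfying~(\ref{eq_mixability}) (with constant $C$ and rate $\eta/K$) for every outcome vector $\bar\omega\in\Omega^K$.

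The second step is to specialise this inequality to the diagonal outcome vectors $\bar\omega=(\omega,\ldots,\omega)$. Since $\lambda^{(K)}$ is additive and each $\bar\gamma^n$ is constant, the exponents collapse: $\lambda^{(K)}(\bar\gamma^n,\bar\omega)=K\lambda(\gamma^n,\omega)$, so the factor $-\tfrac{C}{\eta/K}=-\tfrac{CK}{\eta}$ cancels against the $K$ appearing in the exponent. The right-hand side becomes $-\tfrac{CK}{\eta}\ln\sum_n p^n e^{-\eta\lambda(\gamma^n,\omega)}$ and the left-hand side becomes $\sum_{k=1}^K\lambda(\gamma_k,\omega)$. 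Dividing through by $K$ gives $\tfrac1K\sum_{k=1}^K\lambda(\gamma_k,\omega)\le -\tfrac{C}{\eta}\ln\sum_n p^n e^{-\eta\lambda(\gamma^n,\omega)}$ for every $\omega\in\Omega$.

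The final and decisive step is to replace the averaged loss on the left by the loss of a single prediction. Each $\lambda(\gamma_k,\cdot)$ is a superprediction, so their arithmetic mean $\tfrac1K\sum_k\lambda(\gamma_k,\cdot)$ is a convex combination of superpredictions; here the hypothesis that the set of superpredictions is convex is exactly what guarantees that this mean is again a superprediction, hence minorised by $\lambda(\gamma,\cdot)$ for some $\gamma\in\Gamma$. Chaining $\lambda(\gamma,\omega)\le\tfrac1K\sum_k\lambda(\gamma_k,\omega)$ with the bound from the previous step shows that this $\gamma$ witnesses admissibility of $C$ for $\G$ at rate $\eta$, completing the inclusion. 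I expect this convexity step to be the crux: without it the averaged superprediction need not be realised by any actual prediction in $\Gamma$, which is precisely why the lemma restricts to games with convex superprediction sets.
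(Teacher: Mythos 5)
Your proposal is correct and follows essentially the same route as the paper's own proof: lift each single-game prediction to a constant $K$-vector, apply admissibility for $\G^K$ at rate $\eta/K$, restrict to diagonal outcome vectors so the exponents collapse, and invoke convexity of the superprediction set to replace the average $\tfrac1K\sum_k\lambda(\gamma_k,\cdot)$ by a single prediction. The only (welcome) difference is that you make explicit the inclusion of admissible-constant sets and the resulting inequality of infima, which the paper leaves implicit.
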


\begin{proof}
Let for every  $\gamma^1 =
(\gamma^1_1,\gamma^1_2,\ldots,\gamma^1_K)$, \ldots,  $\gamma^N =
(\gamma^N_1,\gamma^N_2,\ldots,\gamma^N_K)$ and weights
$p^1,p^2,\ldots,p^N$ be
$\gamma_1,\gamma_2,\ldots,\gamma_K\in\Gamma$ such that
$$
\sum_{k=1}^K\lambda(\gamma_k,\omega_k)\le
-\frac{C}{\eta/K}\ln\sum_{n=1}^Np^ne^{-\eta\sum_{k=1}^K\lambda(\gamma_k,\omega_k)/K}
$$
for all $\omega_1,\omega_2,\ldots,\omega_K\in\Omega$.

Given $N$ predictions $\gamma^*_1,\gamma^*_2,\ldots,\gamma^*_K$,
consider $\gamma^n=(\gamma^*_n,\ldots,\gamma^*_n)$ ($K$ times),
$n=1,2,\ldots,N$. There is an array
$\gamma_1,\gamma_2,\ldots,\gamma_K\in\Gamma$ satisfying
$$
\frac{1}{K}\sum_{k=1}^K\lambda(\gamma_k,\omega)\le
-\frac{C}{\eta}\ln\sum_{n=1}^Np^ne^{-\eta\lambda(\gamma^*_k,\omega)}
$$
for all $\omega\in\Omega$ (we let $\omega_1=\omega_2=\ldots=\omega_K=\omega$).

The problem is that $\gamma_k$ do not have to be equal and do not
collate to one prediction. However,
$\sum_{k=1}^K\lambda(\gamma_k,\omega)/K$ is a convex combination of
superpredictions w.r.t.\ $\G$. Since the set of superpredictions is
convex, this expression is a superprediction and there is
$\gamma\in\Gamma$ such that $\lambda(\gamma,\omega)\le
\sum_{k=1}^K\lambda(\gamma_k,\omega)/K$ for all $\omega\in\Omega$.
\end{proof}

We get the following theorem.

\begin{theorem}
For a game $\G$ with a convex set of superprediction, any positive
integer $K$ and learing rate $\eta>0$ we have $C^{(K)}_{\eta/K}=C_\eta$.
\end{theorem}

We need to make a simple observation on the behaviour of
$C^{(K_1)}_{\eta/{K_2}}$ for $K_1\le K_2$.

\begin{lemma}
For every game $\G$ the value of $C_\eta$ is non-decreasing in $\eta$.
\end{lemma}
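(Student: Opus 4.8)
The plan is to prove the equivalent statement that shrinking the learning rate cannot shrink the set of admissible constants: if $0 < \eta_1 \le \eta_2$ and $C > 0$ is admissible for $\eta_2$, then $C$ is also admissible for $\eta_1$. Since $C_\eta$ is defined as the infimum of the constants admissible for $\eta$, this inclusion of admissible sets immediately gives $C_{\eta_1} \le C_{\eta_2}$, which is exactly the asserted monotonicity. Everything therefore hinges on transferring the single witnessing prediction from one learning rate to the other.

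The technical heart is an elementary monotonicity property of the exponential mean. First I would fix a distribution $p^1,\ldots,p^N$ and nonnegative reals $a_1,\ldots,a_N$ (to be specialised later to $a_n = \lambda(\gamma^n,\omega)$) and show that
$$
g(\eta) = -\frac{1}{\eta}\ln\sum_{n=1}^N p^n e^{-\eta a_n}
$$
is non-increasing on $(0,+\infty)$. Writing $b_n = e^{-\eta_2 a_n} \in [0,1]$ and $t = \eta_1/\eta_2 \in (0,1]$, so that $e^{-\eta_1 a_n} = b_n^{t}$, the inequality $g(\eta_1) \ge g(\eta_2)$ reduces, after clearing the positive factors $1/\eta$ and exponentiating, to
$$
\Bigl(\sum_{n=1}^N p^n b_n^{t}\Bigr)^{1/t} \le \sum_{n=1}^N p^n b_n \enspace.
$$
This is precisely Jensen's inequality for the concave function $x \mapsto x^{t}$ (equivalently, the monotonicity of power means), valid for every $t \in (0,1]$. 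I expect this Jensen/power-mean step to be the only substantive point; the rest is bookkeeping.

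With the monotonicity of $g$ in hand, the reduction is immediate. Suppose $C$ is admissible for $\eta_2$ and fix predictions $\gamma^1,\ldots,\gamma^N$ and weights $p^1,\ldots,p^N$; let $\gamma \in \Gamma$ be the witness guaranteeing $\lambda(\gamma,\omega) \le C\, g_\omega(\eta_2)$ for all $\omega$, where $g_\omega$ denotes $g$ built from $a_n = \lambda(\gamma^n,\omega)$. Since $g_\omega$ is non-increasing and $\eta_1 \le \eta_2$, we have $g_\omega(\eta_2) \le g_\omega(\eta_1)$, and as $C > 0$ this yields $\lambda(\gamma,\omega) \le C\, g_\omega(\eta_1)$ for every $\omega$. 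Thus the same $\gamma$ certifies admissibility of $C$ for $\eta_1$. The only care needed is the edge behaviour when some $a_n = +\infty$ (then $b_n = 0$, harmless under the power $t>0$) or when the sum inside the logarithm vanishes; in the latter case the right-hand side is $+\infty$ and the admissibility inequality is vacuous, so no further case analysis is required.
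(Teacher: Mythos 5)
Your proof is correct and follows essentially the same route as the paper: the paper takes a witness $\gamma$ for the larger learning rate, raises the defining inequality to the power $\eta_2/\eta_1\le 1$, and applies Jensen's inequality for the concave map $x\mapsto x^{t}$ to conclude the same $\gamma$ works for the smaller rate --- which is exactly your power-mean monotonicity of $g$, just stated inline rather than as a separate lemma. Your explicit handling of the set-inclusion of admissible constants and of the $+\infty$ edge cases is a slight tidying of the paper's argument, not a different approach.
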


\begin{proof}
Suppose that
\begin{equation*}
e^{-\eta_1\lambda(\gamma,\omega)/C}\ge
\sum_{n=1}^Np^ne^{-\eta_1\lambda(\gamma^n,\omega)}
\end{equation*}
and $\eta_2\le\eta_1$. Raising the inequality to the power
$\eta_2/\eta_1\le 1$ and using Jensen's inequality yields 
\begin{align*}
e^{-\eta_2\lambda(\gamma,\omega)/C}&\ge
\left(\sum_{n=1}^Np^ne^{-\eta_1\lambda(\gamma^n,\omega)}\right)^{\eta_2/\eta_1}\\
&\ge
\sum_{n=1}^Np^ne^{-\eta_2\lambda(\gamma^n,\omega)}\enspace.
\end{align*}
\end{proof}

\begin{remark}
The proof is again constructive in the following sense. If we know how
to solve~(\ref{eq_mixability}) for $\G$ with a learning rate $\eta_1$
and an admissible $C$, we can solve~(\ref{eq_mixability}) for
$\eta_2\le\eta_1$ and the same $C$.
\end{remark}

\begin{corollary}
\label{cor_larger_K}
For every game $\G$ and positive integers $K_1\le K_2$, we have
$C^{(K_1)}_{\eta/K_2}\le C^{(K_1)}_{\eta/K_1}$.
\end{corollary}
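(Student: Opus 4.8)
The plan is to recognise this as an immediate consequence of the preceding monotonicity lemma, applied not to $\G$ but to the pack game $\G^{K_1}$ itself. Indeed, $\G^{K_1}=\langle\Omega^{K_1},\Gamma^{K_1},\lambda^{(K_1)}\rangle$ is a bona fide game, with outcome space $\Omega^{K_1}$, prediction space $\Gamma^{K_1}$, and the summed loss $\lambda^{(K_1)}$. The lemma asserting that $C_\eta$ is non-decreasing in $\eta$ is stated for \emph{every} game, so it applies verbatim to $\G^{K_1}$: the mixability constant of this fixed game, regarded as a function of its single learning-rate argument, is non-decreasing.

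Concretely, I would first note that $K_1\le K_2$ forces $\eta/K_2\le\eta/K_1$. Then, reading $C^{(K_1)}_\mu$ as the mixability constant of the fixed game $\G^{K_1}$ evaluated at learning rate $\mu$, monotonicity in $\mu$ gives $C^{(K_1)}_{\eta/K_2}\le C^{(K_1)}_{\eta/K_1}$, which is exactly the claim. Note that no hypothesis on $\G$ (such as convexity of its superprediction set, needed in Lemma~\ref{lem_lower}) is required here, precisely because monotonicity of $C_\eta$ holds unconditionally for all games.

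There is essentially no mathematical obstacle; the only point requiring care is the bookkeeping of the notation, namely keeping in mind that the superscript $(K_1)$ merely selects which game we work in, while the subscript is the learning rate to which the general monotonicity lemma is applied. The constructive remark accompanying that lemma transfers as well, so one also obtains a substitution rule turning a solution of~(\ref{eq_mixability}) for $\G^{K_1}$ at rate $\eta/K_1$ into one at the smaller rate $\eta/K_2$ with the same constant $C$.
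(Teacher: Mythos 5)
Your argument is exactly the one the paper intends: the corollary is stated immediately after the lemma that $C_\eta$ is non-decreasing in $\eta$ for \emph{every} game, and the derivation is precisely to apply that lemma to the game $\G^{K_1}$ with $\eta/K_2\le\eta/K_1$. Your added observations (that no convexity hypothesis is needed and that the constructive content transfers) are consistent with the paper's surrounding remarks, so the proposal is correct and follows the same route.
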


\begin{remark}
\label{rem_larger_K}
Suppose we play the game $\G^{(K_1)}$ but have to use the learning
rate $\eta/K_2$ with $C$ admissible for $\G$ with $\eta$. To
solve~(\ref{eq_mixability}), we can take $K_1$ solutions
for~(\ref{eq_mixability}) for $\G$ with the learning rate $\eta$.
\end{remark}

\section{Algorithms for Prediction of Packs}
\label{sect_algorithms}

\subsection{Prediction with Plain Bounds}
\label{sect_plain_bounds}

Suppose that in Protocol~\ref{prot_pack} the sizes of all packs are
equal: $K_1=K_2=\ldots=K$ and the number $K$ is known in
advance. The proof of Lemma~\ref{lem_upper} suggests the following
merging strategy, which we will call {\em Aggregating Algorithm for
  Equal Packs} (AAP-e).

\begin{protocol}~
  {\tt
\label{prot_AAP-e}
\begin{tabbing}
  (10)\=\quad\=\quad\=\quad\=\quad\=\quad\=\quad\kill
  (1) initialise weights $w_0^n=p^n$, $n=1,2,\ldots,N$ \\   
  (2) FOR $t=1,2,\dots$\\
  (3) \>\>\> normalise the weights $p_{t-1}^n=w^n_{t-1}/\sum_{i=1}^Nw^i_{t-1}$\\ 
  (4) \>\>\> FOR $k=1,2,\ldots,K$\\
  (5) \>\>\>\>\> read the experts' predictions $\gamma_{t,k}^n$, $n=1,2,\ldots,N$ \\ 
  (6) \>\>\>\>\> output $\gamma_{t,k}\in\Gamma$ satisfying for all
  $\omega\in\Omega$ the \\
  \>\>\>\>\> inequality $\lambda(\gamma_{t,k},\omega)\le 
  -\frac{C}{\eta}\ln\sum_{n=1}^Np_{t-1}^ne^{-\eta\lambda(\gamma_{t,k}^n,\omega)}$\\
  (7) \>\>\> ENDFOR\\
  (8) \>\>\> observe the outcomes $\omega_{t,k}$, $k=1,2,\ldots,K$ \\
  (9) \>\>\> update the experts' weights
  $w_{t}^n=w_{t-1}^ne^{-\eta\sum_{k=1}^K\lambda(\gamma_{t,k}^n,\omega_{t,k})/K}$,\\
  \>\>\> $n=1,2,\ldots,N$ \\
  (10) END FOR
\end{tabbing}
  }
\end{protocol}

This algorithm essentially applies AA to $\G^K$ with the learning rate
$\eta/K$.

If we extend the meaning of $\Loss$ for a strategy $\calS$ working in
the environment specified by Protocol~\ref{prot_with_signals_pack} as
follows:
$$
\Loss_T(\calS) = \sum_{t=1}^T\sum_{k=1}^{K_t}\lambda(\gamma_{t,k},\omega_{t,k})\enspace,
$$
we get the following theorem.

\begin{theorem}
If $C$ is admissible for $\G$ with the learning rate $\eta$, then
the learner following AAP-e suffers loss satisfying
$$
\Loss_T(\calS)\le
C\Loss_{\calE_n}(\calS)+\frac{KC}{\eta}\ln\frac{1}{p^n}
$$ for all outcomes and experts' predictions as long as the pack size
is $K$.
\end{theorem}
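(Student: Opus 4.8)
The plan is to recognise AAP-e as nothing more than the ordinary Aggregating Algorithm (Protocol~\ref{prot_AA}) applied to the product game $\G^K$ with learning rate $\eta/K$, and then to read off the claimed bound directly from Proposition~\ref{prop_AA}. First I would set up the dictionary between the two pictures: one trial of $\G^K$ is one pack, a single outcome of $\G^K$ is the tuple $(\omega_{t,1},\ldots,\omega_{t,K})$, a single prediction is $(\gamma_{t,1},\ldots,\gamma_{t,K})$, and the loss incurred on one such trial is exactly $\lambda^{(K)}(\gamma_t,\omega_t)=\sum_{k=1}^K\lambda(\gamma_{t,k},\omega_{t,k})$. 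Under this identification the extended $\Loss_T$ of a pack-strategy over $T$ packs coincides with the cumulative loss of the corresponding strategy over $T$ trials of $\G^K$, and likewise for each expert $\calE_n$.

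Next I would verify that the operational steps of AAP-e match Protocol~\ref{prot_AA} for $\G^K$ at learning rate $\eta/K$. The weight normalisation in step~(3) is performed once per pack, exactly as AA normalises once per trial. The update in step~(9), $w_t^n=w_{t-1}^n e^{-\eta\sum_{k=1}^K\lambda(\gamma_{t,k}^n,\omega_{t,k})/K}$, is precisely the AA update $w_t^n=w_{t-1}^n e^{-(\eta/K)\lambda^{(K)}(\gamma_t^n,\omega_t)}$ for the product game at learning rate $\eta/K$. The prediction loop in steps~(4)--(7) solves $K$ separate instances of the mixability inequality~(\ref{eq_mixability}) for $\G$ at learning rate $\eta$ with constant $C$, one per coordinate; by the construction in the proof of Lemma~\ref{lem_upper}, the resulting tuple $(\gamma_{t,1},\ldots,\gamma_{t,K})$ then satisfies the mixability inequality for $\G^K$ at learning rate $\eta/K$ with the \emph{same} constant $C$.

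The decisive ingredient is therefore Lemma~\ref{lem_upper}: since $C$ is admissible for $\G$ with $\eta$, its proof shows that $C$ is admissible for $\G^K$ with learning rate $\eta/K$ (this is exactly the inequality $C^{(K)}_{\eta/K}\le C_\eta$, witnessed by the explicit coordinatewise solution). With admissibility secured, I would apply Proposition~\ref{prop_AA} verbatim to $\G^K$ with learning rate $\eta'=\eta/K$, prior $p^1,\ldots,p^N$, and constant $C$, obtaining
$$
\Loss_T(\calS)\le C\,\Loss_T(\calE_n)+\frac{C}{\eta/K}\ln\frac{1}{p^n}
= C\,\Loss_T(\calE_n)+\frac{KC}{\eta}\ln\frac{1}{p^n}
$$
for every expert $\calE_n$ and every horizon $T$, which is the assertion (with $\Loss_T(\calE_n)$ read in place of the mistyped $\Loss_{\calE_n}(\calS)$).

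I expect no genuine obstacle beyond bookkeeping. The only point demanding care is the admissibility transfer, namely confirming that the coordinatewise predictions produced in steps~(4)--(7) really do witness the mixability inequality for the product game; but this is precisely the content already established in the proof of Lemma~\ref{lem_upper}, which simultaneously guarantees that step~(6) is always solvable. The remaining work is the purely clerical check that the cumulative losses of $\calS$ and of each $\calE_n$ over $T$ packs agree with the corresponding cumulative losses over $T$ trials of $\G^K$.
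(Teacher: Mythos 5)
Your proposal is correct and matches the paper's own (implicit) argument exactly: the paper likewise observes that AAP-e is just AA applied to $\G^K$ with learning rate $\eta/K$, with admissibility of $C$ supplied by Lemma~\ref{lem_upper} and the bound read off from Proposition~\ref{prop_AA}. Your write-up merely makes the bookkeeping explicit, which the paper leaves to the reader.
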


Lemma~\ref{lem_lower} shows that the constants in this bound cannot be
improved for equal weights provided $\G$ has a convex set of
superpredictions (and $\G^K$ satisfies the conditions of the
optimality of AA).

Now suppose that $K_t$ differ. To begin with, suppose that we know $K$
upper bounding all $K_t$. Consider the following algorithm, 
{\em Aggregating Algorithm for Packs with the Known Maximum} (AAP-max).

\begin{protocol}~
  {\tt
\label{prot_AAP-max}
\begin{tabbing}
  (10)\=\quad\=\quad\=\quad\=\quad\=\quad\=\quad\kill
  (1) initialise weights $w_0^n=p^n$, $n=1,2,\ldots,N$ \\   
  (2) FOR $t=1,2,\dots$\\
  (3) \>\>\> normalise the weights $p_{t-1}^n=w^n_{t-1}/\sum_{i=1}^Nw^i_{t-1}$\\ 
  (4) \>\>\> FOR $k=1,2,\ldots,K_t$\\
  (5) \>\>\>\>\> read the experts' predictions $\gamma_{t,k}^n$, $n=1,2,\ldots,N$ \\ 
  (6) \>\>\>\>\> output $\gamma_{t,k}\in\Gamma$ satisfying for all
  $\omega\in\Omega$ the \\
  \>\>\>\>\> inequality $\lambda(\gamma_{t,k},\omega)\le 
  -\frac{C}{\eta}\ln\sum_{n=1}^Np_{t-1}^ne^{-\eta\lambda(\gamma_{t,k}^n,\omega)}$\\
  (7) \>\>\> ENDFOR\\
  (8) \>\>\> observe the outcomes $\omega_{t,k}$, $k=1,2,\ldots,K_t$ \\
  (9) \>\>\> update the experts' weights
  $w_{t}^n=w_{t-1}^ne^{-\eta\sum_{k=1}^K\lambda(\gamma_{t,k}^n,\omega_{t,k})/K}$,\\
  \>\>\> $n=1,2,\ldots,N$ \\
  (10) END FOR
\end{tabbing}
  }
\end{protocol}

The essential point here is step (9): we divide by the maximum $K$.

Corollary~\ref{cor_larger_K} and Remark~\ref{rem_larger_K} imply the
following result.

\begin{theorem}
If $C$ is admissible for $\G$ with the learning rate $\eta$, then
the learner following AAP-m suffers loss satisfying
$$
\Loss_T(\calS)\le
C\Loss_{\calE_n}(\calS)+\frac{KC}{\eta}\ln\frac{1}{p^n}
$$
for all outcomes and experts' predictions as long as the pack size
does not exceed $K$.
\end{theorem}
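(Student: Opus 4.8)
The plan is to reduce this to the same per-step mixability argument that underlies Proposition~\ref{prop_AA}, but applied to a game that changes from trial to trial. Write $L_t^n=\sum_{k=1}^{K_t}\lambda(\gamma_{t,k}^n,\omega_{t,k})$ for the loss expert $\calE_n$ incurs on the pack at trial $t$, and $\ell_t=\sum_{k=1}^{K_t}\lambda(\gamma_{t,k},\omega_{t,k})$ for the learner's loss on that pack, so that the effective one-shot game at trial $t$ is $\G^{(K_t)}$ with $K_t\le K$. The only structural difference from AAP-e is that step~(9) always divides the exponent by the fixed maximum $K$ rather than by the actual size $K_t$; this is exactly what keeps the learning rate constant across trials.

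First I would establish per-step mixability with a \emph{uniform} constant and learning rate. At trial $t$, step~(6) produces, for each of the $K_t$ slots, a prediction $\gamma_{t,k}$ solving~(\ref{eq_mixability}) for $\G$ at learning rate $\eta$ with the admissible constant $C$. By Remark~\ref{rem_larger_K} (taking $K_1=K_t$ and $K_2=K$), these $K_t$ solutions collate into a solution of~(\ref{eq_mixability}) for the compound game $\G^{(K_t)}$ at learning rate $\eta/K$ and the \emph{same} constant $C$; equivalently, Corollary~\ref{cor_larger_K} together with Lemma~\ref{lem_upper} gives $C^{(K_t)}_{\eta/K}\le C^{(K_t)}_{\eta/K_t}\le C_\eta\le C$. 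Hence at every trial
\[
e^{-\frac{\eta}{KC}\ell_t}\ge\sum_{n=1}^N p_{t-1}^n e^{-\frac{\eta}{K}L_t^n}\enspace,
\]
with $C$ and $\eta/K$ fixed throughout, even though $K_t$ (and thus the game) varies.

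Next I would run the standard Aggregating Algorithm potential argument, noting that the weight update in step~(9) is precisely $w_t^n=w_{t-1}^n e^{-\frac{\eta}{K}L_t^n}$, i.e.\ the update matched to $\G^{(K_t)}$ at rate $\eta/K$. Multiplying the displayed inequality by $\sum_i w_{t-1}^i$ gives $\sum_n w_t^n\le\bigl(\sum_i w_{t-1}^i\bigr)e^{-\frac{\eta}{KC}\ell_t}$, and a routine induction (with $\sum_n w_0^n=\sum_n p^n=1$) yields $\sum_n w_T^n\le e^{-\frac{\eta}{KC}\Loss_T(\calS)}$. Dropping all but the $n$-th term on the left, using $w_T^n=p^n e^{-\frac{\eta}{K}\Loss_T(\calE_n)}$, and taking logarithms then produces $\Loss_T(\calS)\le C\,\Loss_T(\calE_n)+\frac{KC}{\eta}\ln\frac{1}{p^n}$, as required.

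The step I expect to be the real obstacle --- or at least the one needing care --- is the legitimacy of running a single potential argument across a \emph{sequence of different games} $\G^{(K_1)},\G^{(K_2)},\dots$. Proposition~\ref{prop_AA} is stated for one fixed game, so I cannot simply cite it. The resolution is to observe that the induction only ever invokes the per-trial inequality with a common constant $C$ and a common learning rate $\eta/K$; the underlying game is allowed to change, since nothing in the telescoping depends on $K_t$ being constant. Pinning the denominator at the maximum $K$ (rather than $K_t$) is exactly the device that secures this uniformity, at the cost of the slightly loose factor $K$ in the regret term.
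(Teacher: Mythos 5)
Your proof is correct and follows essentially the same route as the paper, which derives the theorem directly from Corollary~\ref{cor_larger_K} and Remark~\ref{rem_larger_K}: you establish the uniform per-trial admissibility $C^{(K_t)}_{\eta/K}\le C^{(K_t)}_{\eta/K_t}\le C_\eta\le C$ and then run the standard Aggregating Algorithm potential induction at the fixed rate $\eta/K$. The only difference is that you spell out the telescoping argument (and the observation that it tolerates a varying game $\G^{(K_t)}$) which the paper leaves implicit.
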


Clearly, the constants in this bound cannot be improved in the same
sense as above because of the case where all packs have the maximum
size $K$. However, the algorithm clearly uses a suboptimal learning
rate for steps with $K_t<K$. We will address this later.

Now consider the case where $K$ is not known in advance. A simple
trick allows one to handle this inconvenience. Consider the following
algorithm, {\em Aggregating Algorithm for Packs with an Unknown
  Maximum} (AAP-incremental).

\begin{protocol}~
  {\tt
\label{prot_AAP-i}
\begin{tabbing}
  (10)\=\quad\=\quad\=\quad\=\quad\=\quad\=\quad\kill
  (1) initialise losses $L_0^n=0$, $n=1,2,\ldots,N$ \\
  (2) initialise $K^\mathrm{max}_0 = 1$\\
  (3) set weights to  $w_0^n=p^n$, $n=1,2,\ldots,N$ \\   
  (4) FOR $t=1,2,\dots$\\
  (5) \>\>\> normalise the weights $p_{t-1}^n=w^n_{t-1}/\sum_{i=1}^Nw^i_{t-1}$\\ 
  (6) \>\>\> FOR $k=1,2,\ldots,K_t$\\
  (7) \>\>\>\>\> read the experts' predictions $\gamma_{t,k}^n$, $n=1,2,\ldots,N$ \\ 
  (8) \>\>\>\>\> output $\gamma_{t,k}\in\Gamma$ satisfying for all
  $\omega\in\Omega$ the \\
  \>\>\>\>\> inequality $\lambda(\gamma_{t,k},\omega)\le 
  -\frac{C}{\eta}\ln\sum_{n=1}^Np_{t-1}^ne^{-\eta\lambda(\gamma_{t,k}^n,\omega)}$\\
  (9) \>\>\> ENDFOR\\
  (10) \>\>\> observe the outcomes $\omega_{t,k}$, $k=1,2,\ldots,K_t$ \\
  (11) \>\>\> update the losses $L_t^n =
  L_{t-1}^n+\sum_{k=1}^{K_t}\lambda(\gamma_{t,k}^n, \omega_{t,k})$,
  $n=1,2,\ldots,N$\\
  (12) \>\>\> update $K^\mathrm{max}_t = \max(K^\mathrm{max}_{t-1},K_t)$\\
  (13) \>\>\> update the experts' weights
  $w_{t}^n=p^ne^{-\eta L^n_t/K^\mathrm{max}_t}$, $n=1,2,\ldots,N$ \\
  (14) END FOR
\end{tabbing}
  }
\end{protocol}

\begin{theorem}
  \label{theorem_incremental}
If $C$ is admissible for $\G$ with the learning rate $\eta$, then
the learner following AAP-incremental suffers loss satisfying
$$
\Loss_T(\calS)\le
C\Loss_{\calE_n}(\calS)+\frac{KC}{\eta}\ln\frac{1}{p^n}\enspace,
$$ where $K$ is the maximum pack size over $T$ trials, for all
outcomes and experts' predictions
\end{theorem}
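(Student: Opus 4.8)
The plan is to collapse each trial of AAP-incremental into one application of the pack-mixability bound and then to chain the per-trial inequalities, the only delicate point being the growth of $K^{\mathrm{max}}_t$. Throughout write $\kappa_t=K^{\mathrm{max}}_t$, let $\delta^n_t=\sum_{k=1}^{K_t}\lambda(\gamma^n_{t,k},\omega_{t,k})$ be the pack loss of expert $n$ on trial $t$, and let $s_t=\sum_{k=1}^{K_t}\lambda(\gamma_{t,k},\omega_{t,k})$ be that of the learner. First I would fix $t$ and note that on steps~(7)--(8) the learner solves (\ref{eq_mixability}) for $\G$ with the \emph{fixed} prior $p^n_{t-1}$ and learning rate $\eta$; hence $e^{-\eta\lambda(\gamma_{t,k},\omega_{t,k})/C}\ge\sum_n p^n_{t-1}e^{-\eta\lambda(\gamma^n_{t,k},\omega_{t,k})}$ for each $k$. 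Multiplying these $K_t$ inequalities and applying the generalised H\"older inequality exactly as in the proof of Lemma~\ref{lem_upper} gives $s_t\le -\frac{CK_t}{\eta}\ln\sum_n p^n_{t-1}e^{-\eta\delta^n_t/K_t}$. Since $K_t\le\kappa_t$ and, by the Jensen argument behind the monotonicity lemma, the map $\kappa\mapsto-\frac{\kappa}{\eta}\ln\sum_n q^n e^{-\eta a^n/\kappa}$ is non-decreasing for any distribution $q$ and any $a^n\ge0$ (this is the content of Corollary~\ref{cor_larger_K} and Remark~\ref{rem_larger_K}), I may enlarge the denominator to obtain the per-trial bound
\[
s_t\le C\,m_t,\qquad m_t:=-\frac{\kappa_t}{\eta}\ln\sum_n p^n_{t-1}e^{-\eta\delta^n_t/\kappa_t}.
\]

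Next I would rewrite $m_t$ through the weights. By steps~(5) and~(13), $p^n_{t-1}\propto w^n_{t-1}=p^n e^{-\eta L^n_{t-1}/\kappa_{t-1}}$, so it is natural to compare $m_t$ with the increment of the softmin potential $\Phi_t:=-\frac{\kappa_t}{\eta}\ln\sum_n p^n e^{-\eta L^n_t/\kappa_t}$ (with $\Phi_0=0$). When $\kappa_t=\kappa_{t-1}$ the prior is normalised at the very rate used to form $\Phi_t$, the bound telescopes just as in Proposition~\ref{prop_AA}, and over any block of trials sharing a common value of $K^{\mathrm{max}}$ one recovers the AAP-max estimate. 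The final step is the standard one: discarding all but the $n$-th term in $\sum_m p^m e^{-\eta L^m_T/K}$ yields $\Phi_T\le L^n_T+\frac{K}{\eta}\ln\frac1{p^n}$ with $K=\kappa_T$, which is the asserted bound provided $\sum_t s_t$ can be controlled by $C\Phi_T$ up to the slack discussed below.

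The hard part is a trial where $\kappa_t>\kappa_{t-1}$. There the prior $p^n_{t-1}$ is normalised at the old rate $\eta/\kappa_{t-1}$, whereas clean telescoping of $\Phi$ requires the rate $\eta/\kappa_t$; writing $\tilde m_t:=-\frac{\kappa_t}{\eta}\ln\sum_n\hat p^n_{t-1}e^{-\eta\delta^n_t/\kappa_t}$ for the correctly normalised quantity with $\hat p^n_{t-1}\propto p^n e^{-\eta L^n_{t-1}/\kappa_t}$, the monotonicity of the potential in the rate gives $\tilde m_t\le\Phi_t-\Phi_{t-1}$ and hence $\sum_t\tilde m_t\le\Phi_T$. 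However $m_t$ and $\tilde m_t$ differ: since $\eta/\kappa_{t-1}>\eta/\kappa_t$, the prior $p^n_{t-1}$ is more sharply peaked on the experts with small past loss than $\hat p^n_{t-1}$, and if those experts happen to incur large loss on the current pack then $m_t>\tilde m_t$. Consequently the na\"ive per-trial inequality $s_t\le C(\Phi_t-\Phi_{t-1})$, and with it the telescoped bound $\sum_t s_t\le C\Phi_T$, can genuinely fail on such trials.

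My plan is therefore to carry the defect explicitly: after telescoping $\sum_t\tilde m_t\le\Phi_T$ one has $\sum_t s_t\le C\sum_t m_t\le C\Phi_T+C\sum_t(m_t-\tilde m_t)$, and I would absorb the surplus $\sum_t(m_t-\tilde m_t)$ into the gap between $\Phi_T$ and the single-expert quantity, i.e.\ prove $\sum_t(m_t-\tilde m_t)\le\frac{K}{\eta}\ln\frac1{p^n_T}$, where $p^n_T=w^n_T/\sum_m w^m_T$ is the final normalised weight of the comparison expert (this difference is precisely $L^n_T+\frac{K}{\eta}\ln\frac1{p^n}-\Phi_T$). Because $\kappa_t$ is non-decreasing, the defect is nonzero only at the at most $K-1$ trials where $K^{\mathrm{max}}$ strictly increases, and at each such trial it is exactly the cost of re-normalising the whole history at a smaller rate; bounding it by the same Jensen/monotonicity inequality and checking that the accumulated defect never exceeds $\frac{K}{\eta}\ln\frac1{p^n_T}$ is the one genuinely non-routine estimate. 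I expect this reconciliation of the running normalisation rate $\eta/\kappa_t$ with the final rate $\eta/K$ to be the main obstacle; everything else is the mixability bound of Lemma~\ref{lem_upper} together with the monotonicity lemma.
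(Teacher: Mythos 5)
Your overall strategy --- the per-outcome mixability inequality, the H\"older/Jensen step of Lemma~\ref{lem_upper}, and telescoping a softmin potential whose normalisation rate tracks $K^{\mathrm{max}}_t$ --- is the same as the paper's, and you have put your finger on exactly the step that the paper's proof dispatches in one sentence (``over the next trial, the inequality stays''): on a trial where the running maximum increases from $\kappa_{t-1}$ to $\kappa_t$, the weights actually used for prediction are normalised at the stale rate $\eta/\kappa_{t-1}$ (the algorithm only switches to $K^{\mathrm{max}}_t$ in the weight update \emph{after} the outcomes are seen), whereas telescoping the invariant~(\ref{ineq_induction}) requires weights normalised at $\eta/\kappa_t$. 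The paper's induction implicitly assumes the latter, so your diagnosis of where the difficulty lies is more careful than the published argument.

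However, the estimate you defer as ``the one genuinely non-routine'' step cannot be proved, and the reconciliation you plan is impossible for the algorithm exactly as written. Take the log-loss game ($C=1$, $\eta=1$), two experts with uniform prior, and $\ell$ large. On trial~1 (pack size $1$) let expert~1 suffer loss $0$ and expert~2 loss $\ell$; the learner loses about $\ln 2$, and since $K^{\mathrm{max}}_1=1$ the normalised weights become $\bigl(1,e^{-\ell}\bigr)/(1+e^{-\ell})$. On trial~2 (pack size $2$) let expert~1 suffer enormous loss on both outcomes and expert~2 loss $0$; the learner then loses about $\ell$ on \emph{each} outcome, so $\Loss_2(\calS)\approx 2\ell+\ln 2$, while the best expert has total loss $\ell$ and the claimed bound is $\ell+2\ln 2$. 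In your notation $m_2-\tilde m_2\approx\ell$ while the final normalised weight of the comparison expert tends to $1$, so $\frac{K}{\eta}\ln\frac{1}{p^n_T}=0$ and the inequality $\sum_t(m_t-\tilde m_t)\le\frac{K}{\eta}\ln\frac1{p^n_T}$ fails by an unbounded margin --- as does the theorem's conclusion itself. The statement (and the paper's induction) becomes correct only if the weights used to predict a pack are re-normalised with $\max(K^{\mathrm{max}}_{t-1},K_t)$ \emph{before} the pack is predicted; under that reading the defect you are trying to control is identically zero, every trial is a clean AA step at rate $\eta/\kappa_t$, and the paper's two-line induction goes through. You should prove the theorem for that corrected version rather than attempt the defect estimate.
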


\begin{proof}
We will show by induction over time that the inequality 
\begin{equation}
\label{ineq_induction}
e^{-\eta\Loss_t(\calS)/(CK)}\ge \sum_{n=1}^Np^ne^{-\eta\Loss_t(\calE_n)/K}
\end{equation}
holds with $K$ equal to the maximum pack size over the first $t$
trials.

Suppose that the inequality holds on trial $t$. If on trial $t+1$ the
pack size does not exceed $K$, we essentially use AA with the
learning rate $\eta/K$ and maintain the inequality.

If the pack size changes to $K'>K$, we change the learning rate to
$\eta/K'$.  Raising~(\ref{ineq_induction}) to the power $K/K'\le 1$
and applying Jensen's inequality yields
\begin{equation}
e^{-\eta\Loss_t(\calS)/(CK')}\ge \sum_{n=1}^Np^ne^{-\eta\Loss_t(\calE_n)/K'}\enspace.
\end{equation}
Over the next trial, the inequality stays.
\end{proof}

\subsection{Prediction with Bounds on Pack Averages}

The bounds in Section~\ref{sect_plain_bounds} are optimal if all
packs are of the same size. On packs of smaller size there is some
slack.

In this section we present an algorithm that fixes this
problem. However, it results in an unusual kind of bound.

Consider the following algorithm, 
{\em Aggregating Algorithm for Pack Averages} (AAP-current).

\begin{protocol}~
  {\tt
\label{prot_AAP-current}
\begin{tabbing}
  (10)\=\quad\=\quad\=\quad\=\quad\=\quad\=\quad\kill
  (1) initialise weights $w_0^n=p^n$, $n=1,2,\ldots,N$ \\   
  (2) FOR $t=1,2,\dots$\\
  (3) \>\>\> normalise the weights $p_{t-1}^n=w^n_{t-1}/\sum_{i=1}^Nw^i_{t-1}$\\ 
  (4) \>\>\> FOR $k=1,2,\ldots,K_t$\\
  (5) \>\>\>\>\> read the experts' predictions $\gamma_{t,k}^n$, $n=1,2,\ldots,N$ \\ 
  (6) \>\>\>\>\> output $\gamma_{t,k}\in\Gamma$ satisfying for all
  $\omega\in\Omega$ the \\
  \>\>\>\>\> inequality $\lambda(\gamma_{t,k},\omega)\le 
  -\frac{C}{\eta}\ln\sum_{n=1}^Np_{t-1}^ne^{-\eta\lambda(\gamma_{t,k}^n,\omega)}$\\
  (7) \>\>\> ENDFOR\\
  (8) \>\>\> observe the outcomes $\omega_{t,k}$, $k=1,2,\ldots,K_t$ \\
  (9) \>\>\> update the experts' weights
  $w_{t}^n=w_{t-1}^ne^{-\eta\sum_{k=1}^K\lambda(\gamma_{t,k}^n,\omega_{t,k})/K_t}$,\\
  \>\>\> $n=1,2,\ldots,N$ \\
  (10) END FOR
\end{tabbing}
  }
\end{protocol}

In line (9) we divide by the size of the current pack.

In order to obtain an upper bound on the loss of this algorithm, we
need a simple fact. Let $\G=\langle\Omega,\Gamma,\lambda\rangle$ be a
game and $a>0$ a constant. Let $a\G$ be the game with the same outcome
and prediction spaces and the loss function given by
$a\lambda(\gamma,\omega)$. Let $C_{a,\eta}$ be the mixability
constants for $a\G$.

\begin{lemma}
For every $a,\eta>0$ we have $C_{a,\eta}=C_{\eta a}$.
\end{lemma}

For the game $\G^{(K)}/K$ the lemma implies that $C^{(K)}_{1/K,\eta}=
C^{(K)}_{\eta/K}\le C_\eta$. Thus irrespective of $K>0$, the game
$\G^{(K)}/K$ allows all admissible constants of $\G$.

For a strategy $\calS$ working in the environment specified by
Protocol~\ref{prot_with_signals_pack} let 
$$
\Loss_T^\mathrm{average}(\calS) =
\sum_{t=1}^T\frac{\sum_{k=1}^{K_t}\lambda(\gamma_{t,k},\omega_{t,k})}{K_t}\enspace.
$$
We get the following theorem.

\begin{theorem}
If $C$ is admissible for $\G$ with the learning rate $\eta$, then
the learner following AAP-current suffers loss satisfying
$$ \Loss_T^\mathrm{average}(\calS)\le
C\Loss_{\calE_n}^\mathrm{average}(\calS)+\frac{C}{\eta}\ln\frac{1}{p^n}
$$
for all outcomes and experts' predictions.
\end{theorem}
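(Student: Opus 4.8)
The plan is to recognise AAP-current as nothing more than the Aggregating Algorithm applied trial by trial to the rescaled pack game $\G^{(K_t)}/K_t$, with the learning rate $\eta$ and the constant $C$ held fixed throughout. The crucial enabling fact has just been recorded: since $C$ is admissible for $\G$ with $\eta$, the preceding lemma gives $C^{(K)}_{1/K,\eta}=C^{(K)}_{\eta/K}\le C_\eta$, so $C$ is admissible for $\G^{(K)}/K$ with the \emph{same} learning rate $\eta$, for \emph{every} pack size $K$. This uniformity is exactly what makes a single $\eta$ legitimate even though the pack size, and hence the game being played, varies from trial to trial.

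First I would check that the two nontrivial steps of AAP-current really implement AA for $\G^{(K_t)}/K_t$. On the prediction side, step (6) produces, for each $k$, a $\gamma_{t,k}$ solving the one-outcome inequality~(\ref{eq_mixability}) for $\G$ at rate $\eta$; by the construction in the remark after Lemma~\ref{lem_upper}, assembling these $K_t$ solutions yields a solution of~(\ref{eq_mixability}) for $\G^{(K_t)}/K_t$ at rate $\eta$ with the admissible $C$ (solving $\G^{(K_t)}/K_t$ at rate $\eta$ is the same as solving $\G^{(K_t)}$ at rate $\eta/K_t$). On the weight side, step (9) multiplies $w^n_{t-1}$ by $e^{-\eta L^n_t}$, where $L^n_t=\frac{1}{K_t}\sum_{k=1}^{K_t}\lambda(\gamma^n_{t,k},\omega_{t,k})$ is precisely the loss of $\calE_n$ in $\G^{(K_t)}/K_t$ on this trial. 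Thus the per-trial loss that AA sees equals the per-trial contribution to $\Loss^{\mathrm{average}}$.

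The core of the argument is then the induction from the proof of Proposition~\ref{prop_AA}, carried out over the sequence of games $\G^{(K_t)}/K_t$. I would prove that
\[
e^{-\eta\Loss^{\mathrm{average}}_t(\calS)/C}\ge\sum_{n=1}^N p^n e^{-\eta\Loss^{\mathrm{average}}_t(\calE_n)}
\]
holds for every $t$, where $\Loss^{\mathrm{average}}_t$ denotes the cumulative averaged loss up to trial $t$. The base case $t=0$ reduces to $\sum_n p^n=1$. For the inductive step, the mixability inequality for $\G^{(K_{t+1})}/K_{t+1}$ at rate $\eta$, applied to the normalised weights $p^n_t$, bounds the single-trial factor; multiplying it by the inductive hypothesis reproduces the invariant at $t+1$. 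Dropping all but the $n$-th term on the right and applying $-\frac{C}{\eta}\ln(\cdot)$ to both sides then yields the stated bound.

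The only point requiring care, and the place a reader might balk, is that the game changes between trials, so Proposition~\ref{prop_AA} cannot be quoted verbatim. But its induction never uses that the game is the same at every step; it uses only that at each step the mixability inequality holds with the common $C$ and $\eta$. The rescaling lemma supplies exactly this. There is no hard computation here: the substance is the reduction, and the main (mild) obstacle is being explicit that admissibility of $C$ transfers \emph{uniformly} across all pack sizes, which is what permits a single learning rate and a single invariant across a non-stationary sequence of games.
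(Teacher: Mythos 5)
Your proposal is correct and follows exactly the route the paper intends (the paper leaves the proof implicit after establishing $C^{(K)}_{1/K,\eta}=C^{(K)}_{\eta/K}\le C_\eta$): view AAP-current as AA applied to the varying games $\G^{(K_t)}/K_t$ with a fixed $\eta$ and $C$, and run the standard induction on the weighted-sum invariant for $\Loss^{\mathrm{average}}$. Your explicit remark that the induction in Proposition~\ref{prop_AA} only needs the per-trial mixability inequality with common $C$ and $\eta$, not a stationary game, is exactly the point the paper glosses over.
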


This bound is very tight because on every step the algorithm uses the
right learning rate. It implies the following looser bound inferior to
those from Section~\ref{sect_plain_bounds}.

\begin{corollary}
  \label{cor_current}
If $C$ is admissible for $\G$ with the learning rate $\eta$, then
the learner following AAP-current suffers loss satisfying
$$ \Loss_T(\calS)\le
\frac{K_\mathrm{max}}{K_\mathrm{min}}C\Loss_{\calE_n}(\calS)+\frac{CK_\mathrm{max}}{\eta}\ln\frac{1}{p^n}
$$
for all outcomes and experts' predictions
\end{corollary}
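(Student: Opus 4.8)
The plan is to deduce the corollary directly from the preceding theorem by sandwiching the plain cumulative loss between scalar multiples of the average loss, using nothing more than the two-sided bound $K_\mathrm{min}\le K_t\le K_\mathrm{max}$ on the pack sizes together with the nonnegativity of $\lambda$.

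First I would introduce the per-trial losses $\ell_t=\sum_{k=1}^{K_t}\lambda(\gamma_{t,k},\omega_{t,k})$ for the learner and $\ell_t^n=\sum_{k=1}^{K_t}\lambda(\gamma_{t,k}^n,\omega_{t,k})$ for expert $\calE_n$, so that $\Loss_T(\calS)=\sum_{t=1}^T\ell_t$ and $\Loss_T^\mathrm{average}(\calS)=\sum_{t=1}^T\ell_t/K_t$, with the analogous identities $\Loss_{\calE_n}(\calS)=\sum_{t=1}^T\ell_t^n$ and $\Loss_{\calE_n}^\mathrm{average}(\calS)=\sum_{t=1}^T\ell_t^n/K_t$. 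Since $\lambda\ge 0$, every $\ell_t$ and $\ell_t^n$ is nonnegative, which is what allows the required comparisons to be carried out term by term in the sum over $t$.

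The key steps are two one-sided estimates pointing in opposite directions. On the learner's side, $K_t\le K_\mathrm{max}$ gives $\ell_t/K_t\ge \ell_t/K_\mathrm{max}$ for each $t$, hence $\Loss_T^\mathrm{average}(\calS)\ge \Loss_T(\calS)/K_\mathrm{max}$, i.e. $\Loss_T(\calS)\le K_\mathrm{max}\,\Loss_T^\mathrm{average}(\calS)$. On the expert's side, $K_t\ge K_\mathrm{min}$ gives $\ell_t^n/K_t\le \ell_t^n/K_\mathrm{min}$, hence $\Loss_{\calE_n}^\mathrm{average}(\calS)\le \Loss_{\calE_n}(\calS)/K_\mathrm{min}$. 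Substituting both into the theorem's bound $\Loss_T^\mathrm{average}(\calS)\le C\,\Loss_{\calE_n}^\mathrm{average}(\calS)+\frac{C}{\eta}\ln\frac{1}{p^n}$ yields $\frac{1}{K_\mathrm{max}}\Loss_T(\calS)\le \frac{C}{K_\mathrm{min}}\Loss_{\calE_n}(\calS)+\frac{C}{\eta}\ln\frac{1}{p^n}$, and multiplying through by $K_\mathrm{max}$ gives exactly the claimed inequality.

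I do not expect a genuine obstacle here; the one point requiring care is orienting the two inequalities correctly. The learner's average loss must be \emph{lower}-bounded in terms of its plain loss (so we divide by the \emph{largest} pack), while the expert's average loss must be \emph{upper}-bounded (so we divide by the \emph{smallest} pack); swapping these would produce the wrong constant $K_\mathrm{min}/K_\mathrm{max}$. Both passages rely on $\lambda\ge 0$ to move the bound on $1/K_t$ inside the summation, so I would state that nonnegativity explicitly at the outset.
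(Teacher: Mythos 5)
Your argument is correct and is exactly the derivation the paper intends (the corollary is stated without proof as an immediate consequence of the preceding theorem): lower-bound $\Loss_T^\mathrm{average}(\calS)$ by $\Loss_T(\calS)/K_\mathrm{max}$, upper-bound $\Loss_{\calE_n}^\mathrm{average}(\calS)$ by $\Loss_{\calE_n}(\calS)/K_\mathrm{min}$, and multiply through by $K_\mathrm{max}$. Your explicit attention to the orientation of the two inequalities and to the nonnegativity of $\lambda$ is exactly the right care to take.
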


\subsection{Parallel Copies}

In this section, we describe an existing algorithm based on running
parallel copies of the merging strategy. We will call it Parallel
Copies. It is essentially the BOLD from~\cite{joulani2013online}.

The algorithm applies to a slightly more general protocol with delayed
feedback. Under this protocol, on every step the learner gets just one
round of predictions from each expert and must produce one
prediction. However, the outcomes become available later. The delay is
the number of trials between making a prediction and obtaining
outcomes. In standard Protocol~\ref{prot_experts} the delay is always
one. Prediction of packs of size not exceeding $K$ can be
considered as prediction with delays not exceeding $K$.

The algorithm is as follows. We fix a base merging algorithm working
according to Protocol~\ref{prot_experts}. We will maintain an array of
base algorithms. An algorithm in the array is ready to predict if it
knows outcomes for all predictions it has made; otherwise it is
blocked.

At each trial, when a new round of experts' predictions arrive, we
pick a ready algorithm from the array (say, one with the lowest
number) and give the experts' predictions to it. It produces an output,
which we pass on, and the algorithm becomes blocked until the outcome
for that trials arrive. If all algorithms are currently blocked, we
add a new copy of the base algorithm to the array.

Suppose that we are playing a game $\G$ and $C$ is admissible for $\G$
with a learning rate $\eta$. For the base algorithm take AA with $C$,
$\eta$ and initial weights $p^1,p^2,\ldots,p^N$. If the delay never
exceeds $D$, we never need more than $D$ algorithms in the array and
each of them suffers loss satisfying
Proposition~\ref{prop_AA}. Summing the bounds up, we get that the loss
of $\calS$ using this strategy satisfies
\begin{equation}
\label{AA_delay}
\Loss_T(\calS)\le C\Loss_T(\calE_n)+\frac{CD}{\eta}\ln\frac{1}{p^n}
\end{equation}
for every expert $\calE_n$.

The value of $D$ does not need to be known in advance; we can always
expand the array as the delay increases.

Note that that the protocol with delays is more general than the
protocol of packs. On the other hand, for the parallel copies of
the algorithms the order in the pack matters. This cannot be seen
from~\ref{AA_delay}, but obviously happens: it is important which
example is picked by each copy.

\section{A Mix Loss Lower Bound}
\label{sect_mix_loss}

The loss bounds in the theorems formulated above are often tight due
to the optimality of the Aggregating Algorithm. The tightness was
discussed after the corresponding results.

In this section we present a self-contained lower bound formulated for
the mix loss protocol of \cite{adamskiy2016closer}. The proof sheds
some more light on the extra term in the bound. 

The mix loss protocol covers a number of learning settings including
prediction with a mixable loss function; see Section~2 of
\cite{adamskiy2016closer} for a discussion.  Consider the following
protocol porting mix loss Protocol~1 from \cite{adamskiy2016closer} to
prediction of packs.

\begin{protocol}~
  {\tt
\label{prot_pack_mix_loss}
\begin{tabbing}
\quad\=\quad\=\quad\=\quad=\quad\kill
 FOR $t=1,2,\ldots$\\
 \>\> nature announces $K_t$\\
 \>\> learner outputs $K_t$ arrays of $N$ probabilities\\
 \>\>\> $p^1_{t,k},p^2_{t,k},\ldots,p^N_{t,k}$, $k=1,2,\ldots,K_t$,
 such that\\
 \>\>\> $p^n_{t,k}\in [0,1]$ for all $n$ and $k$ and $\sum_{n=1}^Np^n_{t,k}=1$ for all $k$\\
 \>\> nature announces losses
 $\ell^n_{t,1},\ell^n_{t,2},\ldots,\ell^n_{t,K_t}\in (-\infty,+\infty]$\\
 \>\> learner suffers loss $\ell_t =
-\sum_{k=1}^{K_t}\ln\sum_{n=1}^Np^n_{t,k}e^{-\ell_{t,k}}$\\
  ENDFOR
\end{tabbing}
}
\end{protocol}

The total loss of the learner over $T$ steps is
$L_T=\sum_{t=1}^T\ell_t$. It should compare well against
$L_T^n=\sum_{t=1}^T\ell^n_t$, where $\ell^n_t = \sum_{k=1}^{K_t}
\ell^n_{t,k}$. The values of $L^n_T$ are the counterparts of experts'
total losses. We shall propose a course of action for the nature
leading to a high value of the {\em regret}
$L_T-\min_{n=1,2,\ldots,N}L^n_T$.

\begin{lemma}
For any $K$ arrays of $N$ probabilities
$p^1_{k},p^2_{k},\ldots,p^N_{k}$, $k=1,2,\ldots,K$, where $p^n_{k}\in
[0,1]$ for all $n=1,2,\ldots,N$ and $k=1,2,\ldots,K$ and
$\sum_{n=1}^Np^n_{k}=1$ for all $k$, there is $n$ such that
\begin{equation*}
\prod_{k=1}^Kp^n_k\le\frac{1}{N^K}\enspace.
\end{equation*}
\end{lemma}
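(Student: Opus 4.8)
The plan is to reduce this existence statement to a single product inequality that follows from the arithmetic--geometric mean inequality applied coordinatewise, combined with an averaging argument. Write $a_n = \prod_{k=1}^K p^n_k$ for the quantity whose minimum over $n$ we want to control; the claim is exactly that $\min_n a_n \le N^{-K}$.

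First I would bound the \emph{full} product $\prod_{n=1}^N a_n$ rather than try to estimate the individual $a_n$. Interchanging the order of the two finite products gives $\prod_{n=1}^N a_n = \prod_{k=1}^K \left(\prod_{n=1}^N p^n_k\right)$. For each fixed $k$ the numbers $p^1_k,\ldots,p^N_k$ form a probability distribution, so by AM--GM their geometric mean is at most their arithmetic mean $1/N$, i.e.\ $\prod_{n=1}^N p^n_k \le N^{-N}$. Taking the product over the $K$ values of $k$ then yields $\prod_{n=1}^N a_n \le N^{-NK}$.

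Second, I would finish by a pigeonhole (averaging) step. If every $a_n$ strictly exceeded $N^{-K}$, then $\prod_{n=1}^N a_n > (N^{-K})^N = N^{-NK}$, contradicting the bound just established. Hence at least one index $n$ satisfies $a_n \le N^{-K}$, which is the assertion. Equivalently, one can phrase the whole argument in logarithms: $\sum_{n=1}^N \sum_{k=1}^K \ln p^n_k \le -NK\ln N$, so the average over $n$ of $\sum_k \ln p^n_k$ is at most $-K\ln N$, and some $n$ attains at most the average.

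I do not expect a genuine obstacle here; the only real decision is to estimate the aggregate product $\prod_n a_n$ instead of the terms separately, after which AM--GM and averaging do all the work. The one point to check is the degenerate case where some $p^n_k = 0$, which makes $a_n = 0$: then that index $n$ already satisfies $a_n \le N^{-K}$ trivially, so one may assume all entries are strictly positive when invoking AM--GM (or simply dispose of the zero case first).
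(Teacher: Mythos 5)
Your proof is correct, and it takes a genuinely different (transposed) route from the paper's. The paper applies AM--GM along the $k$-index for each fixed $n$, obtaining $\bigl(\prod_{k=1}^K p^n_k\bigr)^{1/K}\le\frac1K\sum_{k=1}^K p^n_k$, and then \emph{sums} over $n$: since each array is a probability vector, the right-hand sides add up to exactly $1$, so $\sum_{n=1}^N\bigl(\prod_k p^n_k\bigr)^{1/K}\le 1$ and some term must be at most $1/N$, contradicting the assumption that every product exceeds $N^{-K}$. You instead apply AM--GM along the $n$-index for each fixed $k$, getting $\prod_{n=1}^N p^n_k\le N^{-N}$, then \emph{multiply} over $k$ and finish with a multiplicative pigeonhole on $\prod_n a_n\le N^{-NK}$. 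Both arguments exploit the same normalisation $\sum_n p^n_k=1$ and are equally elementary and short; the intermediate aggregate inequalities differ (the paper bounds a sum of $K$-th roots by $1$, you bound the geometric mean of the $a_n$ by $N^{-K}$), but neither is stronger than the other for the purpose of the lemma. Your remark on the degenerate case $p^n_k=0$ is careful but not strictly needed, since AM--GM for nonnegative reals already covers it.
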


\begin{proof}
Assume the converse. Let $\prod_{k=1}^Kp^n_k>1/N^K$ for all $n$. By
the inequality of arithmetic and geometric means
\begin{equation*}
\sum_{k=1}^K\frac{p^n_k}{K} \ge \left(\prod_{k=1}^K
p^n_k\right)^{\frac 1K} 
\end{equation*}
for all $n=1,2,\ldots,N$. Summing the left-hand side over $n$ yields
\begin{equation*}
\sum_{n=1}^N \sum_{k=1}^K\frac{p^n_k}{K} = \frac 1K \sum_{k=1}^K
\sum_{n=1}^N p^n_k =1\enspace.
\end{equation*}
Summing the right-hand side over $n$ and 
using the assumption on the products of $p^n_k$, we get
\begin{equation*}
\sum_{n=1}^N\left(\prod_{k=1}^K
p^n_k\right)^{\frac 1K} > \sum_{n=1}^N\left(\frac 1{N^K}\right)^{\frac
  1K} =\sum_{n=1}^N \frac 1N =1\enspace.
\end{equation*}
The contradiction proves the lemma.
\end{proof}

Here is the strategy for the nature. Upon getting the probability
distributions from the learner, it finds $n_0$ such that
$\prod_{k=1}^{K_t}p^{n_0}_{t,k}\le 1/N^{K_t}$ and sets
$\ell^{n_0}_{t,1}=\ell^{n_0}_{t,2}=\ldots=\ell^{n_0}_{t,K_t}=0$ and
$\ell^{n}_{t,k}=+\infty$ for all other $n$ and $k=1,2,\ldots,K_t$. The
learner suffers loss
\begin{equation*}
\ell_t = -\sum_{k=1}^{K_t}\ln p^{n_0}_{t,k}= -\ln\prod_{k=1}^{K_t}
p^{n_0}_{t,k}\ge -\ln\frac 1{N^{K_t}}=K_t\ln N
\end{equation*}
while $\ell^{n_0}_t=0$. We see that over a single pack of size $K$ we
can achieve the regret of $K\ln N$. Thus every upper bound of the form
\begin{equation*}
L_T \le L^n_T+R
\end{equation*}
should have $R\ge K_1\ln N$, where $K_1$ is the size of the first pack.

\section{Experiments}
\label{sect_experiments}

In this section, we present some empirical results. Our purpose is
twofold. First, we want to study the behaviour of the algorithms
described above in practice. Secondly, we want to demonstrate the
power of on-line learning.

\subsection{Datasets and Models}

For our experiments, we used two datasets of house prices. There is a
tradition of using house prices as a benchmark for machine learning
algorithms going back to the Boston housing dataset. However, batch
learning protocols have hitherto been used in most studies.

Recently extensive datasets with timestamps have become
available. They call for on-line learning protocols.  Property prices
are prone to strong movements over time and the pattern of change may
be complicated. On-line algorithms should capture these patterns.

\subsubsection{Ames House Prices}

The first dataset describes the property sales that occurred in Ames,
Iowa between 2006 and 2010. The dataset contains records of 2930 house
sales transactions with 80 attributes, which are a mixture of nominal,
ordinal, continuous, and discrete parameters (including physical
property measurements) affecting the property value.  The dataset was
compiled by Dean De Cock for use in statistics education
\cite{DeCock2011Ames} as a modern substitute for the Boston Housing
dataset.

There are timestamps in the dataset, but they contain only the month
and the year of the purchase. The date is not available.  Therefore,
one can not apply the on-line protocol directly to the problem as at
each time we observe a vector of outcomes instead of a single
outcome. It is natural to try and work out all predicted prices for a
particular month on the basis of past months and only then to see the
true prices. One month of transactions makes what we call a pack in
this paper. We interpret the problem as falling under
Protocol~\ref{prot_with_signals_pack}. The prediction and outcome
spaces are a real interval $\Omega = \Gamma = [A, B]$ and the square
loss function $\lambda(\gamma, \omega) = (\gamma - \omega)^2$ is used.
This game is mixable and the maximum $\eta=2/(B-A)^2$ is the maximum
such that $C_\eta=1$ (see \cite{vovk_cols}; the derivation for the
interval $[-Y,Y]$ can be easily adapted for $[A,B]$).

We apply AAP algorithms to Ames house prices data set. In the first
set of experiments our experts are linear regression models based on
only two attributes: the neighbourhood and the total square footage of
the dwelling. These simple models explain around 80\% of the variation
in sales prices and they are very easy to train. Each expert has been
trained on one month of the first year of the data. Hence there are 12
`monthly' experts.

In the second set of experiments on Ames house dataset we use random
forests (RF) models after~\cite{Bellotti2017CP_supplementary}. A model
was built for each quarter of the first year.  Hence there are four
`quarterly' experts. They take longer to train but produce better
results.  Note that `monthly' RF experts were not practical. Training
a tree requires a lot of data and `monthly' experts returned very poor
results.

We then apply the experts to predict the prices starting from year
two.

\subsubsection{London House Prices}

Another data set that was used to compare the performance of AAP
contained house prices in and around London over the period 2009 to
2014. This dataset was made publicly available by the Land Registry
in the UK and was originally sourced as part of a Kaggle
competition. The Property Price data consists of details for property
sales and contains around 1.38 million observations.  
This data set was studied before to provide reliable region 
predictions for Automated Valuation Models of house prices  \cite{Bellotti2017CP}.

As with Ames dataset, we use linear regression models that were built
for each month of the first year of the data as experts of AAP. 
Features that were used in regression models contain information about the property: 
property type, whether new build, whether free- or leasehold. 
Along with the information about the proximity to tubes and railways, models use
the English indices of deprivation 2010 which measures relative levels of deprivation.
The following deprivation scores were used in models: 
income, employment, health and disability, education for children and skills
for adults, barriers to housing and services with sub-domains wider barriers
and geographical barriers, crime, living environment score
with sub-domains for indoor and outdoor living (i.e. quality of housing and external environment, respectively). 
Additional to the general income score, separate
scores for income deprivation affecting children and the older population
were used.

In the second set of experiments on London house dataset we use RF models 
built for each month of the first year as experts. 
Compare to Ames dataset, London house dataset contains enough observations to 
train RF models on one month of the data. Hence we have 12 `monthly' experts.

\subsection{Comparison of Merging Algorithms}
\subsubsection{Comparison of AAP with Parallel Copies of AA}

We start by comparing the family of AAP merging algorithms against
parallel copies of AA. While for AAP algorithms the order of examples
in the pack makes no difference, for parallel copies it is
important. To analyse the dependency on the order we ran parallel
copies 500 times randomly shuffling each pack each time. 

Figure~\ref{fig1a} shows the histogram of total losses of the parallel
copies of AA with regression experts on Ames house dataset.  The
average total loss of parallel copies is almost the same as the total
losses of AAP-incremental and AAP-max. AAP-current shows the best
performance among AAP algorithms with a slight improvement over the
mean. While the performance of parallel copies {\em can} be better,
AAP family provides {\em stable} order-independent performance, which
is good on average.

There is one remarkable ordering where parallel copies show greatly
superior performance. If packs are ordered by PID (i.e., as in the
database), parallel copies suffer substantially lower loss. PID
(Parcel identification ID) is assigned to each property by the tax
assessor. It is related to the geographical location. When the packs
are ordered by PID, parallel copies benefit from geographical
proximity of the houses; each copy happens to get similar houses.

Figure~\ref{fig1b} shows the histogram of total losses of the
algorithm with parallel copies of AA with RF experts. In this case,
the average total loss of this algorithm is slightly lower than total
losses of AAP-incremental and AAP-max. AA with parallel copies ordered
by PID has lower total loss than the average. AAP-current has the
lowest total loss among the AAP family and even beats the parallel
copies for PID-ordered packs.



\begin{figure}[!h]
	\centering
	\subfloat[Regression on Ames house prices]{\label{fig1a}\includegraphics[width=60mm, height=60mm]{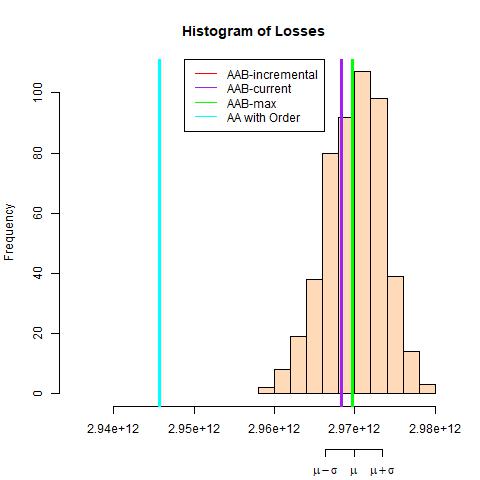}}
	\subfloat[RF on Ames house prices]{\label{fig1b}\includegraphics[width=60mm, height=60mm]{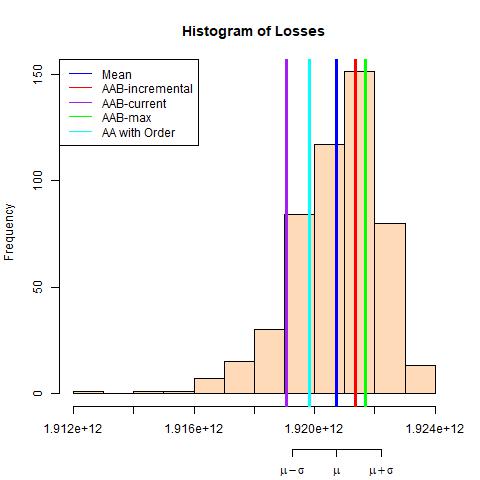}}
		\caption{Histogram of total losses}
	\label{fig1}
\end{figure}



\subsubsection{Comparison of AAP-incremental and AAP-max}

Figure~\ref{fig2a} illustrates the difference in total losses of
AAP-incremental and AAP-max on Ames house prices data with regression
models. AAP-incremental performs better at the beginning of the period
when the current maximum size of the pack is much lower than the maximum
pack of the whole period. After that, AAP-incremental and AAP-max
have almost similar performance and the total losses level out.

Figure~\ref{fig5a} illustrates the difference in total losses of
AAP-incremental and AAP-max on Ames house prices data with RF
experts. Figures \ref{fig6a}, \ref{fig7a} show the same experiment conducted on
London house prices. In these cases AAP-incremental steadily
outperforms AAP-max.

\begin{figure}[!h]
	\centering
	\subfloat[Regression on Ames house prices]{\label{fig2a}\includegraphics[width=50mm, height=50mm]{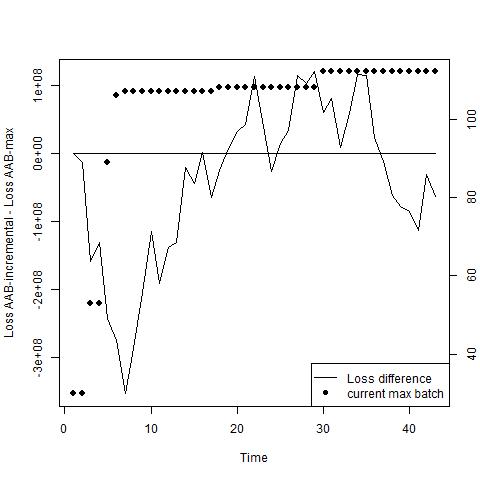}}
	\subfloat[RF on Ames house prices]{\label{fig5a}\includegraphics[width=50mm, height=50mm]{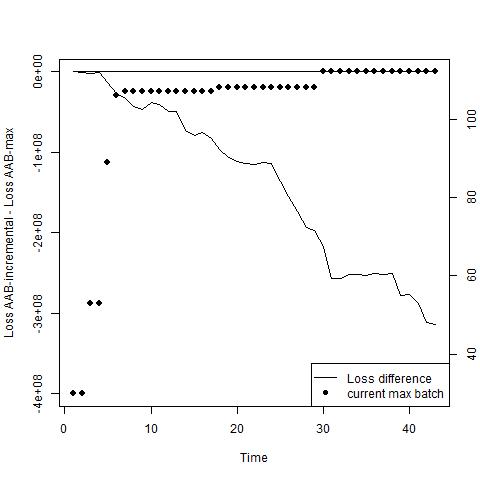}} \\
	\subfloat[Regression on London house prices]{\label{fig6a}\includegraphics[width=50mm, height=50mm]{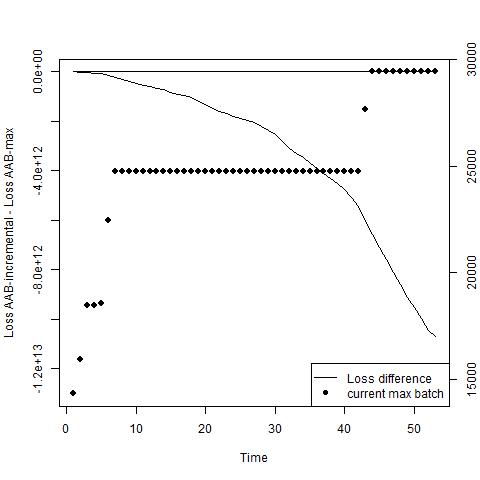}}
	\subfloat[RF on London house prices]{\label{fig7a}\includegraphics[width=50mm, height=50mm]{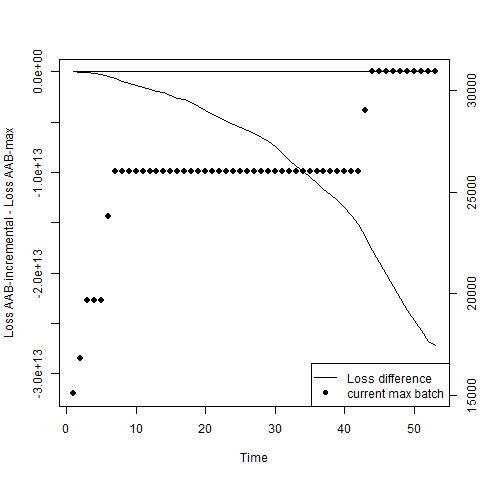}}
	\caption{Comparison of total losses of AAP-incremental and AAP-max}
	\label{fig2}
\end{figure}


\subsubsection{Comparison of AAP-current and AAP-incremental}

Figure \ref{fig3} illustrates the difference in total losses of
AAP-current and AAP-incremental. Figures \ref{fig2b} and \ref{fig5b}
show results for Ames house prices for regression and RF experts
respectively, Figures \ref{fig6b}, \ref{fig7b} ~--- for London house prices. 
In all experiments AAP-current steadily outperforms AAP-incremental.

The performance of AAP-current is remarkable because by design it is
not optimised to minimise the total loss. The bound of
Corollary~\ref{cor_current} is weak in comparison to that of
Theorem~\ref{theorem_incremental}. In a way, here we assess AAP-current
with a measure it is not good at. Still optimal decisions of
AAP-current produce superior performance.

\begin{figure}[!h]
	\centering
	\subfloat[Regression on Ames house prices]{\label{fig2b}\includegraphics[width=50mm, height=50mm]{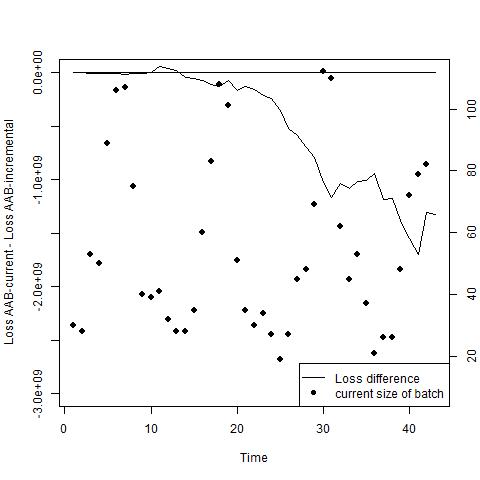}}
	\subfloat[RF on Ames house prices]{\label{fig5b}\includegraphics[width=50mm, height=50mm]{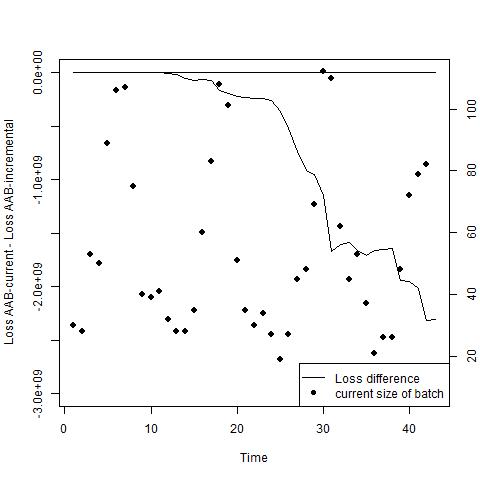}} \\
	\subfloat[Regression on London house prices]{\label{fig6b}\includegraphics[width=50mm, height=50mm]{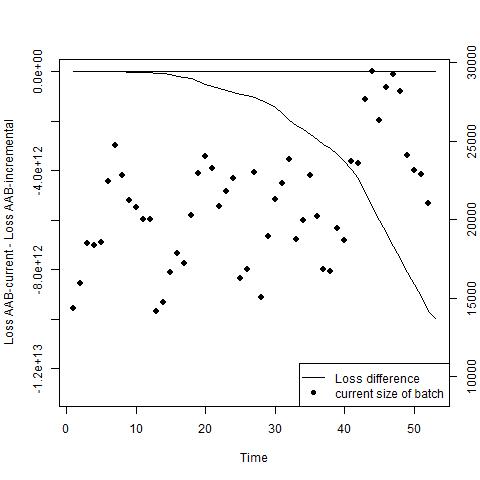}}
	\subfloat[RF on London house prices]{\label{fig7b}\includegraphics[width=50mm, height=50mm]{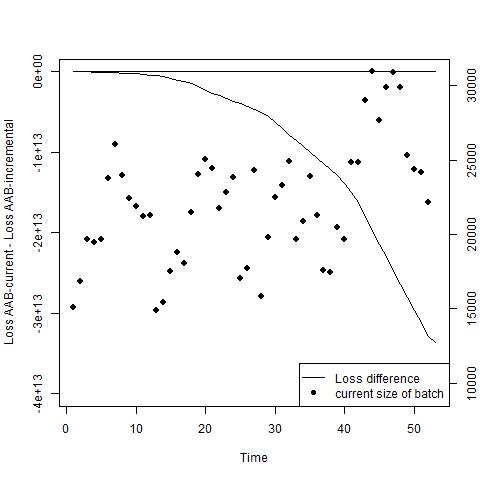}}
	\caption{Comparison of total losses of AAP-current and AAP-incremental}
	\label{fig3}
\end{figure}

\subsubsection{Comparison of AAP with Batch Models}

In this section we compare AAP-incremental with two straightforward
ways of prediction, which are essentially batch. One goal we have here
is to do a sanity check and verify whether we are not studying
properties of very bad algorithms. Secondly, we want to show that
prediction with expert advice may show better ways of handling
the available historical information.

In AAP we use linear regression models that have been trained on each
month of the first year of the data. Is the performance of these
models affected by straightforward seasonality?  What if we always
predicts January with the January model, February with the February
model etc?

The first batch model we compare our on-line algorithm to is the seasonal model
that predict January with linear regression model that has been
trained on January of the first year, February~--- with linear model
of February of the first year, etc. 

In the case of `quarterly' RF experts, we compete with seasonal model
that predict first quarter with RF model that has been trained on the first quarter, 
second quarter~--- with RF of the second quarter, etc.

Secondly, what if we train a model on the whole of the first year?
This may be more expensive than training smaller models, but what do
we gain in performance? The second batch model is the linear model
that has been trained on the whole first year of the
data. In case of RF experts, we compete with RF model that has been trained 
on the first year of the data.

Figure~\ref{fig4} shows the comparison of total losses of AAP-current
and batch linear regression models for Ames house dataset. AAP-current consistently performs better than
the seasonal batch model. Thus the straightforward utilisation of
seasonality does not help.

When compared to the linear regression model of the first year, AAP-current initially
has higher losses but it becomes better towards the end. It could be
explained as follows. AAP-current needs time to train until it becomes
good in prediction. These results show that we can make a better use of the past data with
prediction with expert advice than with models that were trained in
the batch mode.

Table \ref{Losses} shows total losses of algorithm (divided by
$10^{12}$). AAP algorithms always outperform seasonal batch models.
As compared to linear regression batch models that were built on the
first year of the data, AAP is slightly better on Ames house dataset
and slightly worse on London house dataset. RF batch models that were
built on the first year of the datasets constantly outperform AAP
algorithms.

The losses quoted for the parallel copies are the means over 500
random shuffles as explained above. The experiment was not run for
London house prices as it is very time-consuming.
\begin{figure}[!h]
	\subfloat[Loss difference of AAP-current and \newline monthly batch]{\label{fig3a}\includegraphics[width=60mm, height=60mm]{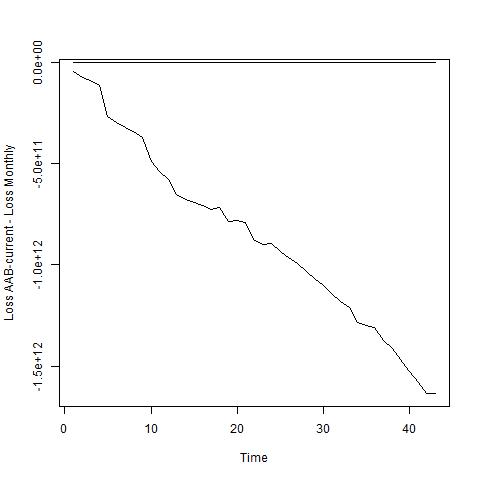}}
	\subfloat[Loss difference of AAP-current and \newline year batch]{\label{fig3b}\includegraphics[width=60mm, height=60mm]{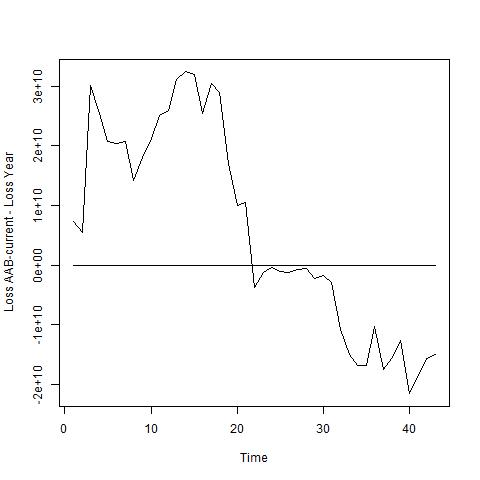}}
	\caption{Comparison of total losses of AAP and batch models}
	\label{fig4}
\end{figure}

 \begin{table}[H]
\caption{Total losses}
   \begin{tabular}[!h]{|l|l|l|l|l|}
	\hline
	& Ames reg & Ames RF & London reg & London RF \\
	\hline
	AAP max & 2.9698 & 1.9217 & 28484.34 & 18819.01\\
	AAP incremental & 2.9697 & 1.9214 & 28474.26 & 18791.79\\
	AAP current & 2.9684 & 1.9191 & 28461.39 & 18758.14 \\
	Parallel copies & 2.9699 & 1.9207 & - & -  \\
	Batch Seasonal & 4.6036 & 2.1485 &  28750.21 & 22543.43\\
	Batch Year & 2.9833 &  1.4699 & 28272.74 & 15877.6 \\
	\hline
\end{tabular}

\label{Losses}

\subsection{Conclusion}
\end{table}
We tested the performance of AAP against the algorithm with parallel
copies of AA. We found that the average performance of algorithm with
parallel copies of AA is close to the performance of AAP. AA with
parallel copies ordered by PID has lower total loss than the average
of AA with parallel copies which means that a meaningful ordering can
have a big impact on the performance of the algorithm. In the absence
of such knowledge, AAP algorithms provide more stable performance.

AAP-current is constantly outperforming AAP-incremental and AAP-max on
two data sets. Therefore, we do not need to know the maximum size of
the pack in advance.

Also experiments showed that in some cases we could get the better use
of the past data with AAP than with models that were trained in the
batch mode.

\bibliographystyle{alpha}
\bibliography{C:/Bib/yk_bibfile}

\begin{thebibliography}{AKCV16}

\bibitem[AKCV16]{adamskiy2016closer}
D.~Adamskiy, W.~M. Koolen, A.~Chernov, and V.~Vovk.
\newblock A closer look at adaptive regret.
\newblock {\em The Journal of Machine Learning Research}, 17(23):1--21, 2016.

\bibitem[Bel]{Bellotti2017CP_supplementary}
T.~Bellotti.
\newblock Reliable region predictions for automated valuation models:
  supplementary material for {A}mes housing data.
\newblock Available at
  \url{http://wwwf.imperial.ac.uk/~abellott/publications.htm}.

\bibitem[Bel17]{Bellotti2017CP}
T.~Bellotti.
\newblock Reliable region predictions for automated valuation models.
\newblock {\em Annals of Mathematics and Artificial Intelligence}, pages 1--14,
  2017.

\bibitem[CBL06]{cbl_book_prediction}
N.~Cesa-Bianchi and G.~Lugosi.
\newblock {\em Prediction, Learning, and Games}.
\newblock Cambridge University Press, 2006.

\bibitem[DC11]{DeCock2011Ames}
D.~De~Cock.
\newblock Ames, {I}owa: Alternative to the {B}oston housing data as an end of
  semester regression project.
\newblock {\em Journal of Statistics Education}, 19(3), 2011.

\bibitem[JGS13]{joulani2013online}
P.~Joulani, A.~Gyorgy, and C.~Szepesv{\'a}ri.
\newblock Online learning under delayed feedback.
\newblock In {\em Proceedings of the 30th International Conference on Machine
  Learning (ICML-13)}, pages 1453--1461, 2013.

\bibitem[KACS15]{kalnishkan2015specialist}
Y.~Kalnishkan, D.~Adamskiy, A.~Chernov, and T.~Scarfe.
\newblock Specialist experts for prediction with side information.
\newblock In {\em 2015 IEEE International Conference on Data Mining Workshop
  ({ICDMW})}, pages 1470--1477. IEEE, 2015.

\bibitem[KVV04]{me_lossleg}
Y.~Kalnishkan, V.~Vovk, and M.~V. Vyugin.
\newblock Loss functions, complexities, and the {L}egendre transformation.
\newblock {\em Theoretical Computer Science}, 313(2):195--207, 2004.

\bibitem[Lo{\`e}77]{loeve_vol1}
M.~Lo{\`e}ve.
\newblock {\em Probability Theory I}.
\newblock Springer, 4th edition, 1977.

\bibitem[LW94]{maj_little_1994}
N.~Littlestone and M.~K. Warmuth.
\newblock The {W}eighted {M}ajority {A}lgorithm.
\newblock {\em Information and Computation}, 108:212--261, 1994.

\bibitem[QK15]{quanrud2015online}
K.~Quanrud and D.~Khashabi.
\newblock Online learning with adversarial delays.
\newblock In {\em Advances in Neural Information Processing Systems}, pages
  1270--1278, 2015.

\bibitem[Vov90]{vovk_aggr}
V.~Vovk.
\newblock Aggregating strategies.
\newblock In {\em Proceedings of the 3rd Annual Workshop on Computational
  Learning Theory}, pages 371--383, San Mateo, CA, 1990. Morgan Kaufmann.

\bibitem[Vov98]{vovk_advice}
V.~Vovk.
\newblock A game of prediction with expert advice.
\newblock {\em Journal of Computer and System Sciences}, 56:153--173, 1998.

\bibitem[Vov01]{vovk_cols}
V.~Vovk.
\newblock Competitive on-line statistics.
\newblock {\em International Statistical Review}, 69(2):213--248, 2001.

\bibitem[WO02]{weinberger2002delayed}
M.~J. Weinberger and E.~Ordentlich.
\newblock On delayed prediction of individual sequences.
\newblock {\em IEEE Transactions on Information Theory}, 48(7):1959--1976,
  2002.

\bibitem[Zin03]{zinkevich2003online}
M.~Zinkevich.
\newblock Online convex programming and generalized infinitesimal gradient
  ascent.
\newblock In {\em Proceedings of the 20th International Conference on Machine
  Learning (ICML-03)}, pages 928--936, 2003.

\end{thebibliography}
\end{document}